\documentclass[oneside,english]{amsart}
\usepackage[foot]{amsaddr}
\usepackage{times}
\usepackage{geometry}
\usepackage[T1]{fontenc}
\usepackage[latin9]{inputenc}
\usepackage{amsthm}
\usepackage{amssymb}
\usepackage{dsfont}
\usepackage[all,cmtip]{xy}
\usepackage{hyperref}
\usepackage{stmaryrd}
\usepackage[autostyle]{csquotes}
\usepackage{mathtools,bm,enumitem}

\newtheorem{Theorem}{Theorem}[section]
\newtheorem{Lemma}[Theorem]{Lemma}
\newtheorem{Proposition}[Theorem]{Proposition}
\newtheorem{Corollary}[Theorem]{Corollary}
\theoremstyle{definition}
\newtheorem{Definition}[Theorem]{Definition}
\newtheorem{Example}[Theorem]{Example}
\theoremstyle{remark}
\newtheorem{Remark}[Theorem]{Remark}

\makeatletter
\numberwithin{equation}{section}
\numberwithin{figure}{section}

\makeatother

\newcommand{\EE}{\mathbb{E}}

\newcommand{\ZZ}{\mathcal{Z}}
\renewcommand{\AA}{\mathcal{A}}

\newcommand{\TT}{\mathcal{T}}

\newcommand{\NN}{\mathcal{N}}
\newcommand{\XX}{\mathcal{X}}
\newcommand{\YY}{\mathcal{Y}}
\newcommand{\RR}{\mathbb{R}}
\newcommand{\MM}{\mathcal{M}}
\newcommand{\WW}{\mathcal{W}}

\newcommand{\LL}{\mathcal{L}}

\newcommand{\ovec}{\operatorname{vec}}

\newcommand{\DD}{\mathcal{D}}

\newcommand{\ba}{\mathbf{a}}

\newcommand{\bc}{\mathbf{c}}
\newcommand{\bx}{\mathbf{x}}
\newcommand{\bw}{\mathbf{w}}
\newcommand{\by}{\mathbf{y}}
\newcommand{\bz}{\mathbf{z}}
\newcommand{\bb}{\mathbf{b}}
\newcommand{\bo}{\mathbf{o}}
\newcommand{\bT}{\mathbb{T}}
\newcommand{\bW}{\mathbf{W}}
\newcommand{\bB}{\mathbf{B}}
\newcommand{\bG}{\mathbf{G}}
\newcommand{\bI}{\mathbf{I}}
\newcommand{\bJ}{\mathbf{J}}
\newcommand{\bA}{\mathbf{A}}
\newcommand{\bZ}{\mathbf{Z}}
\newcommand{\bF}{\mathbf{F}}
\newcommand{\bOmega}{\mathbf{\Omega}}
\newcommand{\bPhi}{\mathbf{\Phi}}
\newcommand{\btau}{\bm{\tau}}
\newcommand{\bgamma}{\bm{\gamma}}

\DeclareMathOperator{\Exp}{Exp}

\DeclarePairedDelimiterX{\infdivx}[2]{(}{)}{%
	#1\;\delimsize\|\;#2%
}
\newcommand{\KLd}{\operatorname{KL}\infdivx}

\usepackage{babel}
\usepackage{tikz}
\title{A Coordinate-Free Construction of Scalable Natural Gradient}
\author{Kevin Luk$^*$}
\address[$*$]{Borealis AI}
\email{kevin.luk@borealisai.com}
\author{Roger Grosse$^\dagger$}
\address[$\dagger$]{University of Toronto, Vector Institute}
\email{rgrosse@cs.toronto.edu}

\begin{document}

\maketitle

\begin{abstract}
	Most neural networks are trained using first-order optimization methods, which are sensitive to the parameterization of the model. Natural gradient descent is invariant to smooth reparameterizations because it is defined in a coordinate-free way, but tractable approximations are typically defined in terms of coordinate systems, and hence may lose the invariance properties. We analyze the invariance properties of the Kronecker-Factored Approximate Curvature (K-FAC) algorithm by constructing the algorithm in a coordinate-free way. We explicitly construct a Riemannian metric under which the natural gradient matches the K-FAC update; invariance to affine transformations of the activations follows immediately. We extend our framework to analyze the invariance properties of K-FAC applied to convolutional networks and recurrent neural networks, as well as metrics other than the usual Fisher metric.
\end{abstract}

\section{introduction} \label{sec:intro}

Most neural networks are trained using stochastic gradient descent (SGD) \cite{Bottou_sgd}, or variants thereof which adapt step sizes for individual dimensions \cite{adagrad,Adam}. One well-known deficiency of SGD is that the updates are sensitive to the parameterization of the network. There are numerous tricks for reparameterizing network architectures so that they represent the same sets of functions, but in a friendlier coordinate system. Examples include replacing logistic activation functions with $\tanh$ \cite{GlorotBengio10}, whitening the inputs or activations \cite{natural_neural_networks,krizhevsky2009learning}, or centering the activations to have zero mean and/or unit variance \cite{RBM_centering,enhanced_gradient,Raiko_linear,batch_norm}. Such tricks can lead to large improvements in the speed of optimization.

Ideally, one would like to use an optimization algorithm which is invariant to such transformations of a neural network, in order to avoid the pathologies which the transformations are meant to remedy. Natural gradient descent \cite{Amari98} is a second-order optimization algorithm motivated by a key invariance property: to the first-order, its updates are invariant to smooth reparameterizations of a model. The natural gradient of a cost function can be seen as the gradient of the function on a Riemannian manifold (typically using the Fisher information metric \cite{AmariBook}), and the invariance properties of the algorithm follow directly from its definition in terms of differential geometric primitives.

There have been many attempts to apply natural gradient descent, or approximations thereof, to training neural networks \cite{Amari_MLP,TONGA,HF,FANG,KFAC15,desjardins2015natural,schulman2015trust}. The challenge is that the exact natural gradient is impractical to compute for large neural nets, because it requires solving a linear system whose dimension is the number of parameters (which may be in the tens of millions for modern networks). Unfortunately, tractable approximations to the natural gradient are typically defined in terms of particular coordinate representations, and therefore may lose the invariance properties which motivated natural gradient in the first place. For instance, diagonal approximations to natural gradient descent (e.g.~\cite{adagrad,Adam}) are not invariant to re-centering of the inputs. Ollivier \cite{Ollivier15} presented an approximation to natural gradient which is invariant to affine transformations of individual coordinates of the input, though this misses important classes of transformations such as whitening.

Kronecker-Factored Approximate Curvature (K-FAC) \cite{KFAC15} is an approximate natural gradient optimizer where the Fisher information matrix $\mathbf{F}$ is approximated as a block diagonal matrix with one block per layer of the network, and each block factorizes as the Kronecker product of small matrices. Because of the Kronecker structure, the approximate natural gradient can be computed with low overhead relative to ordinary SGD; K-FAC demonstrated significant speedups in training deep autoencoders \cite{KFAC15}, classification convolutional networks \cite{KFC,dist_KFAC}, recurrent networks~\cite{martens2018kronecker} and deep reinforcement learning~\cite{wu2017scalable}. The same Fisher matrix approximation has also led to significant improvements in modeling posterior uncertainty in Bayesian neural networks~\cite{ritter2018scalable,zhang2017noisy} and avoiding catastrophic forgetting~\cite{ritter2018online}. 

Although K-FAC does not satisfy the general invariance properties of natural gradient, it is still invariant to a broad and interesting class of reparameterizations: affine transformations of the activations in each layer \cite{KFAC15}. This was verified through linear algebraic manipulation of the update rules, but unfortunately the proofs yielded little insight into the algorithm or advice about how it can be extended. 

Here we take a different approach: we formulate K-FAC directly in terms of coordinate-free mathematical objects, so that the invariance properties follow immediately from the construction. Specifically, we view a neural network as a series of affine maps alternating with fixed nonlinear activation functions. The activations and pre-activations for each layer are viewed as abstract affine spaces, and the weights and biases of the network correspond to affine maps. The ordinary Fisher metric is a metric on this space $\WW$ of affine maps. Our contribution is a recipe to convert a metric on $\WW$ (whose coordinate representation is extremely large) into an approximate metric on $\WW$ (the \emph{``K-FAC metric''}), whose coordinate representation matches the K-FAC approximation. Hence, rather than view K-FAC as an approximation to the natural gradient under the Fisher metric, we view it as the exact natural gradient under the K-FAC metric. This entire construction is coordinate-free, so the invariance properties of K-FAC follow immediately.

We can contrast K-FAC's invariance properties with those of exact natural gradient descent. Since the exact natural gradient is derived in terms of a metric on a smooth manifold, the update is invariant to arbitrary smooth reparameterizations, but only up to the first-order. An update rule which achieves higher-order invariance with modest computational overhead was introduced recently in~\cite{song2018accelerating}. By contrast, we show global invariance to a more restricted class of reparameterizations. Our analysis imposes additional structure on the weight manifold $\WW$: the parameters are assumed to define affine maps between affine spaces. Choosing affine bases for the activations and pre-activations yields a natural affine basis for $\WW$. Therefore, the set of allowable reparameterizations for neural networks consists of affine change-of-basis transformations for the activations and pre-activations. This leaves out some unusual reparameterizations which exact natural gradient descent is invariant to, such as permuting the entries of the weight matrix. But it captures important classes of reparameterizations, such as whitening, normalization, and replacing the logistic activation function with $\tanh$. And in exchange for imposing the affine structure, we obtain \emph{global} invariance, not just first-order invariance.

Our framework easily enables some generalizations of the basic result. First of all, our construction applies to general \emph{pullback metrics}, where one places a metric on the network's output space and pulls it back to $\WW$. In addition to the Fisher metric, this also includes the Gauss-Newton and Generalized Gauss-Newton metrics \cite{Martens14}. The Gauss-Newton metric is defined in terms of particular output space parameterizations (e.g.~logits), which implies the derived K-FAC update is not invariant to reparameterizations of the outputs; however, our analysis shows it is invariant to affine reparameterizations of all other layers of the network. We also extend the invariance results to convolutional networks and recurrent neural networks through a straightforward application of our K-FAC metric construction.

\subsection{Organization of the paper.}~\mbox{} \label{subsec:organization}

We begin in Section~\ref{sec:background} with general background on the natural gradient and the K-FAC approximation. In the latter half of Section~\ref{sec:background}, we provide background on all of the mathematical machinery that we use later in the paper. In Section~\ref{sec:abstract-mlp}, we formulate the multilayer perceptron architecture in a coordinate-free manner, where the activations and pre-activations are considered as elements in affine spaces, and the weights and biases parameterize affine maps. Sections~\ref{subsec:independence metric} and~\ref{subsec:KFAC metric} form the heart of the paper: we show how to convert pullback metrics on $\WW$ into metrics whose coordinate representation matches the K-FAC approximation. The invariance properties of K-FAC follow immediately. Sections~\ref{sec:abstract-CNN} and~\ref{sec:abstract-RNN} extend our analysis to convolutional networks and recurrent neural networks, respectively. Both cases are straightforward applications of the tools developed in Section~\ref{sec:abstract-mlp}, illustrating the power and flexibility of our approach.

\subsection{Acknowledgments.}~\mbox{}
 
We are grateful to Francis Bischoff, Matt Johnson, Chia-Cheng Liu, and Sushant Sachdeva for clarifications and helpful discussions.

\begin{table}
	\begin{tabular}{ |c|c|c|  }
		\hline
		& Coordinate-independent & Coordinate-dependent \\
		\hline
		inputs & $\xi$ & $\bf{x}$ \\
		targets & $\upsilon$ & $\bf{y}$ \\
		activations & $\alpha_i$ & $\bf{a}_i$ \\
		homogenized activations & & $\mathbf{\bar{a}}_i$ \\
		pre-activations & $\zeta_i$ & $\bf{z}_i$ \\
		layerwise parameters (and biases) & $\omega_i$ & $(\mathbf{W}_i\ \mathbf{b}_i)$ \\
		homogenized layerwise parameters & & $\bar{\bW}_i$ \\
		activation functions & $\rho_i$ & $\phi_i$   \\
		network parameter & $\omega$ & $\bw$ \\
		network output & $f(\xi,\omega)$ & $f(\bx,\bw)$ \\
		loss function & $\LL(\upsilon,f(\xi,\omega))$ & $\LL(\by,f(\bx,\bw))$ \\
		objective function & $h(\omega)$ & $h(\bw)$ \\
		predictive distribution & $P_{\upsilon|\xi}(\omega)$ & $P_{\by|\bx}(\bw)$ \\
		predictive distribution density function & $p(\upsilon|\xi,\omega)$ & $p(\by|\bx,\bw)$ \\
		Fisher metric/matrix & $g_F(\omega)$ & $\bF(\bw)$ \\
		layerwise log-likelihood differential/gradient & $d\LL_{\omega_i}$ & $\DD\bar{\bW}_i$ \\
		log-likelihood differential/gradient & $d\LL_{\omega}$ & $\DD\bw$ \\
		K-FAC metric/matrix & $g_{\mathrm{KFAC}}(\omega)$ & $\hat{\bF}(\bw)$ \\
		tensor product/Kronecker product & $\otimes$ & $\otimes$ \\
		input space & $\XX$ & \\
		output space & $\YY$ & \\
		space of activations & $\AA_i$ & \\
		space of pre-activations & $\ZZ_i$ & \\
		layerwise weight space  & $\WW_i$ & \\
		weight space of network & $\WW$ & \\
		\hline
	\end{tabular}
	\caption{Since this paper involves the interplay of many coordinate-independent and coordinate-dependent objects, we summarize these notations here. Note that these notations are for Sections~\ref{sec:background} and~\ref{sec:abstract-mlp} where we work with MLPs. The notations in Sections~\ref{sec:abstract-CNN} and~\ref{sec:abstract-RNN} for convolutional networks and recurrent networks will be self-contained. As a general rule of thumb, we use boldface to symbolize coordinate-dependent objects and standard math font for coordinate-independent ones.}
\end{table}

\clearpage

\section{background} \label{sec:background}

\subsection{Algorithmic background}~\mbox{}

We first present an introduction to the (exact) natural gradient descent algorithm and show how the invariance properties of this algorithm are immediate from its coordinate-free formulation. We then provide an overview of the K-FAC algorithm following~\cite{KFAC15}.

\subsubsection{Natural gradient descent}~\mbox{} \label{subsubsec:NGD basics}

For simplicity, we define the natural gradient descent algorithm here in the context of multilayer perceptrons (MLPs), i.e., fully connected feed-forward networks. Given an input-target pair $(\bx,\by)$, let $f(\bx,\bw)$ denote the output and $\bw$ symbolize the parameter vector of the MLP. We would like to minimize the expected risk
\[
\EE_{\bx,\by}[\LL(\by,f(\bx,\bw))],
\]
where $\LL(\by,f(\bx,\bw))$ is the loss function measuring the disagreement between $\by$ and $f(\bx,\bw)$. The expectation above is taken with respect to a joint distribution over $(\bx,\by)$, such as the empirical distribution. For a training set $\mathcal{S}$ of pairs $(\bx_i,\by_i)$, the empirical risk is given by

\begin{equation} \label{eq:empirical risk function}
h(\bw) = \frac{1}{|\mathcal{S}|}\sum_{(\bx_i,\by_i)\in\mathcal{S}}\LL(\by_i,f(\bx_i,\bw)).
\end{equation}

Suppose that $f(\bx,\bw)$ determines parameters $\bz$ of the model's predictive distribution $R_{\by|\bz}$ over $\by$ and furthermore, we reparameterize this as $P_{\by|\bx}(\bw)=R_{\by|f(\bx,\bw)}$. Likewise, the density function of this distribution can be reparameterized as $p(\by|\bx,\bw)=r(\by|f(\bx,\bw))$. In addition, we take the loss function here to be the negative log-likelihood $\LL(\by,\bz)=-\log r(\by|\bz)$ and denote log-likelihood gradients $\nabla_{\bw}\LL(\by,f(\bx,\bw))$ by $\DD\bw$ ($\DD$ notation throughout remainder of the paper refers to log-likelihood gradients).  The Fisher information matrix $\bF(\bw)$ is defined as
\begin{equation} \label{eq:Fisher matrix}
\bF(\bw) = \EE_{\bx,\by}[(\DD\bw)(\DD\bw)^{\top}],
\end{equation}
where the expectation is taken over $P_{\by|\bx}(\bw)$ for $\by$ and over the data distribution for $\bx$. Since $\bF(\bw)$ is defined as the expectation of an outer product, $\bF(\bw)$ is always guaranteed to be a positive-semidefinite (PSD) matrix.

The natural gradient of the objective function $h(\bw)$ in Eqn.~\ref{eq:empirical risk function} is $\bF(\bw)^{-1}\nabla h(\bw)$. For a chosen learning rate $\epsilon>0$, the natural gradient descent algorithm~\cite{Amari98} minimizes $h(\bw)$ by using the natural gradient to update parameters of the network:
\begin{equation} \label{eq:NGD update rule}
\bw\leftarrow\bw-\epsilon \bF(\bw)^{-1}\nabla h(\bw).
\end{equation}

Natural gradient descent can be understood as a second-order optimization algorithm. As shown in~\cite{Martens14,pascanu2013revisiting}, for small $\delta>0$, the second-order Taylor expansion of the KL-divergence between $P_{\by|\bx}(\bw)$ and $P_{\by|\bx}(\bw+\delta)$ is
\[
\KLd{P_{\by|\bx}(\bw)}{P_{\by|\bx}(\bw+\delta)}\approx\frac{1}{2}\delta^{\top} \bF(\bw)\delta.
\]

For special cases where the predictive distribution $R_{\by|\bz}$ of the network corresponds to an exponential family with $\bf{z}$ representing natural parameters, $\bF(\bw)$ is exactly the generalized Gauss-Newton matrix~\cite{Martens14}. This matrix is often used as a curvature matrix for various second-order optimization methods; for example, in Hessian-free optimization~\cite{martens2010deep} or in Krylov subspace descent~\cite{vinyals2012krylov}.

\subsubsection{Invariance properties of natural gradient descent}~\mbox{} \label{subsubsec:NG invariances}

In addition to exploiting the local geometric structure of the space of predictive distributions, natural gradient descent possesses a key invariance property which does not hold for ordinary stochastic gradient descent (SGD): given two equivalent networks which are parameterized differently, after applying the natural gradient descent update to each, the resulting networks will be equivalent up to the first-order. The reason for this is that the natural gradient admits an intrinsic coordinate-free construction in terms of differential geometric primitives. 

We consider an abstract mathematical setting. This section uses some standard mathematical terminology for which the background is given in Section \ref{sec:math-background}.

Let $\MM$ be a Riemannian manifold with Riemannian metric given by $g$. For a smooth function $h:\MM\to\RR$ and a point $p\in\MM$, the differential $dh(p)$ is an abstract covector on $\MM$. To convert the covector $dh(p)$ into a tangent vector, we use the Riemannian metric $g$. By definition, $g(p)$ is a nondegenerate bilinear form which yields the linear isomorphism between the tangent space and the cotangent space: 
\[
g(p):T_p\MM\overset{\cong}{\to} T_p^*\MM.
\]
This is commonly referred to as the musical isomorphism in mathematical literature. The inverse $g(p)^{-1}$ gives a linear map the other way around,
\[
g(p)^{-1} : T_p^*\MM \overset{\cong}{\to} T_p \MM.
\]
Applying this isomorphism to $dh(p)$ yields the tangent vector $g(p)^{-1}dh(p)\in T_p\MM$. We call this tangent vector the natural gradient of $h$.

We apply this mathematical framework to the objects of our interest. First, let $\WW$ be a smooth manifold which characterizes the weight space of network parameters intrinsically. Let $\omega$ and $(\xi,\upsilon)$ be intrinsic versions of the parameter vector $\bw$ and the input-target pair $(\bx,\by)$ respectively. Now, $\WW$ can be endowed with the Fisher metric $g_F$ which is defined as
\begin{equation} \label{eq:Fisher metric on parameter space}
g_F(\omega) = \EE_{\xi,\upsilon}[d\LL_{\omega}\otimes d\LL_{\omega}],
\end{equation}
where $\LL_{\omega} = -\log p(\upsilon|\xi,\omega)$ is the abstract log-likelihood loss function. The expectation above is taken over the abstract predictive distribution $P_{\upsilon|\xi}(\omega)$ for $\upsilon$ and over the data distribution for $\xi$. Expressing this in a coordinate system, we have
\begin{equation} \label{eq:Fisher metric in coordinates computation}
\begin{aligned}
\llbracket g_F(\omega) \rrbracket & = \llbracket \EE_{\xi,\upsilon}[d\LL_{\omega}\otimes d\LL_{\omega}]\rrbracket \\
& = \EE_{\bx,\by}[\llbracket d\LL_{\omega}\otimes d\LL_{\omega}\rrbracket] \\
& = \EE_{\bx,\by}[(\DD\bw)(\DD\bw)^{\top}] \\
& = \bF(\bw),
\end{aligned}
\end{equation}
which is exactly the Fisher matrix as given earlier in Eqn.~\ref{eq:Fisher matrix}. 

As $\WW$ is a Riemannian manifold with Fisher metric $g_F$, the idealized gradient descent updates are given by~\cite{bonnabel2013stochastic}
\begin{equation} \label{eq:RGD update rule}
\omega\leftarrow\omega-\epsilon\mathrm{Exp}_{\omega}(g_F(\omega)^{-1}dh(\omega)),
\end{equation}
where $\Exp_{\omega}:T_{\omega}\WW\to\WW$ is the exponential map. This update rule is~\emph{exactly} invariant to all smooth reparameterizations of $\WW$ since it is entirely coordinate-free. However, such an algorithm is infeasible in practice as computing the exponential map is typically an intractable problem. Instead, it is much easier to work with the following abstract natural gradient update rule which uses a first-order approximation of the exponential map
\[
\omega\leftarrow\omega-g_F(\omega)^{-1}dh(\omega).
\]
Writing the above expression in coordinates, by Eqn.~\ref{eq:Fisher metric in coordinates computation}, is equivalent to the update rule in Eqn~\ref{eq:NGD update rule}. Since this is a first-order approximation of the update rule in Eqn~\ref{eq:RGD update rule}, invariance to smooth reparameterizations holds only up to first-order. Additional approximations to the exponential map are necessary to obtain higher-order invariances; we defer to~\cite{song2018accelerating} for a more detailed account of how this can be done.

\subsubsection{Kronecker-Factored Approximate Curvature (K-FAC)}~\mbox{}

We consider a MLP with $L$ layers. At each layer $i\in\{1,\dots,L\}$, the MLP computation is given as follows:
\begin{align*}
\bz_i & = \bW_i\ba_{i-1} + \bb_i \\
\ba_i & = \phi_i(\bz_i),
\end{align*}
where $\ba_{i-1}$ is an activation vector, $\bz_i$ is a pre-activation vector, $\bW_i$ is a weight matrix, $\bb_i$ is a bias vector, and $\phi_i:\RR\to\RR$ is an activation function. For convenience, we introduce homogeneous coordinates $\bar{\ba}_{i-1}^{\top} = [\bar{\ba}_{i-1}^{\top}\ 1]^{\top}$ and $\bar{\bW}_i = [\bW_i\ \bb_i]$. Then, the above computation can be rewritten as
\begin{equation} \label{eq:homogeneous feed-forward computation}
\begin{aligned}
\bz_i & = \bar{\bW}_i\bar{\ba}_{i-1} \\
\ba_i & = \phi_i(\bz_i).
\end{aligned}
\end{equation}
We concatenate all of the network parameters $\bar{\bW}_i$ into a single vector $\bw$,
\[
\bw = [\ovec(\bar{\bW}_1)^{\top}\ \ovec(\bar{\bW}_2)^{\top}\ \dots\ \ovec(\bar{\bW}_L)^{\top}]^{\top}.
\]
Here, $\ovec$ denotes the vectorization operator which stacks the columns of a matrix together to form a vector. The Fisher matrix for the MLP is a $L\times L$ block matrix $\bF({\bw})$ with each $(i,j)$-th block given by
\[
\bF(\bw)_{i,j} = \EE[\ovec(\DD\bar{\bW}_i)\ovec(\DD\bar{\bW}_j)^{\top}].
\]

Given an objective function $h(\bw)$, we can minimize this using natural gradient $\bF(\bw)^{-1}\nabla h(\bw)$ as explained previously. While natural gradient descent has desirable theoretical properties, it is not feasible for practical purposes: the major challenge lies in the difficulty of both storing $\bF(\bw)$ and solving linear systems involving $\bF(\bw)$ for large networks which may have millions of parameters. By making assumptions on the underlying probabilistic model structure, the Kronecker-Factored Approximate Curvature (K-FAC) method~\cite{KFAC15} approximates the Fisher matrix efficiently from a computation standpoint. We now give a brief overview of the K-FAC algorithm.

Consider the diagonal $(i,i)$ blocks of $\bF(\bw)$. Using backpropagation, the log-likelihood gradient $\DD\bar{\bW}_i = \DD\bz_i\bar{\ba}_{i-1}^{\top}$ and hence, we have $\ovec(\DD\bz_i\bar{\ba}_{i-1}^{\top})=\bar{\ba}_{i-1}\otimes\DD\bz_i$. Then, $\bF(\bw)_{i,i}$ can be rewritten as:
\begin{align*}
\bF(\bw)_{i,i} & = \EE[\ovec(\DD\bar{\bW}_i)\ovec(\DD\bar{\bW}_i)^{\top}] \\
& = \EE[(\bar{\ba}_{i-1}\otimes\DD\bz_i)(\bar{\ba}_{i-1}\otimes\DD\bz_i)^{\top}] \\
& = \EE[\bar{\ba}_{i-1}\bar{\ba}_{i-1}^{\top}\otimes\DD\bz_i\DD\bz_i^{\top}],
\end{align*}
where $\otimes$ denotes the Kronecker product of matrices. If the activations and pre-activation derivatives are approximated as statistically independent, this yields the following approximation $\hat{\bF}(\bw)_{i,i}$ to $\bF(\bw)_{i,i}$,
\begin{equation} \label{eq:KFAC decomposition blocks}
\hat{\bF}(\bw)_{i,i} = \EE[\bar{\ba}_{i-1}\bar{\ba}_{i-1}^{\top}]\otimes\EE[\DD\bz_i\DD\bz_i^{\top}] = \bA_{i-1}\otimes\bG_i,
\end{equation}
where $\bA_{i-1} = \EE[\bar{\ba}_{i-1}\bar{\ba}_{i-1}^{\top}]$ and $\bG_i = \EE[\DD\bz_i\DD\bz_i^{\top}]$ are second moment matrices of the activations and pre-activation derivatives respectively. The K-FAC approximation matrix $\hat{\bF}(\bw)$ to $\bF(\bw)$ is defined as
\begin{equation} \label{eq:KFAC matrix}
\hat{\bF}(\bw) = \left[\begin{array}{cccc}
\bA_{0}\otimes \bG_{1} &  &  & \mathbf{0}\\
& \bA_{1}\otimes \bG_{2}\\
&  & \ddots\\
\mathbf{0} &  &  & \bA_{L-1}\otimes \bG_{L}
\end{array}\right].
\end{equation}
To determine the inverse $\hat{\bF}(\bw)^{-1}$, we use the fact that Kronecker factors may be inverted in the following way: $(\bB\otimes\mathbf{C})^{-1} = \bB^{-1}\otimes\mathbf{C}^{-1}$. Thus, the approximate natural gradient using K-FAC, $\hat{\bF}(\bw)^{-1}\nabla h(\bw)$, can be computed as
\[
\hat{\bF}(\bw)^{-1}\nabla h(\bw)=\left[\begin{array}{c}
\ovec(\bG_{1}^{-1}(\nabla_{\bar{\bW}_{1}}h(\bw))\bA_{0}^{-1})\\
\vdots\\
\ovec(\bG_{L}^{-1}(\nabla_{\bar{\bW}_L}h(\bw))\bA_{L-1}^{-1})
\end{array}\right].
\]
Analogously to the natural gradient descent algorithm given earlier, K-FAC updates the parameters of the network according to the following update rule:
\begin{equation} \label{eq:KFAC update rule}
\bw\leftarrow\bw-\epsilon\hat{\bF}(\bw)^{-1}\nabla h(\bw).
\end{equation}

\subsubsection{Invariance properties of K-FAC}~\mbox{}

Since K-FAC uses the approximation $\hat{\bF}(\bw)$ rather than the Fisher matrix $\bF(\bw)$ itself, the invariance properties of natural gradient do not necessarily carry over to K-FAC. Instead, we consider the class of transformations given by the following transformed network
\begin{equation} \label{eq:transformed network computation}
\begin{aligned}
\bz_i^{\dagger} & = \bar{\bW}_i^{\dagger}\bar{\ba}_{i-1}^{\dagger} \\
\ba_i^{\dagger} & = \phi_i^{\dagger}(\bz_i^{\dagger}) \\
&  = \bOmega_i\phi_i(\bPhi_i\bz_i+\bm{\tau}_i)+\bgamma_i
\end{aligned}
\end{equation}
where $\bOmega_i$, $\bPhi_i$ are invertible matrices and $\btau_i$, $\bgamma_i$ are vectors. The transformed input is $\bar{\ba}_0^{\dagger} = \bar{\bOmega}_0\bar{\ba}_0$ where
\[
\bar{\bOmega}_0 = \left[\begin{array}{cc}
\bOmega_0 & \bgamma_0 \\
\mathbf{0} & 1
\end{array}\right],
\]
and the transformed output is $\ba_L^{\dagger} = f^{\dagger} (\bx^{\dagger},\bw^{\dagger})$ with $\bw^{\dagger}$ defined as
\[
\bw^{\dagger} = [\ovec(\bar{\bW}_1^{\dagger})^{\top}\ \ovec(\bar{\bW}_2^{\dagger})^{\top}\ \dots\ \ovec(\bar{\bW}_L^{\dagger})^{\top}]^{\top}.
\]
The original and transformed network are equivalent in terms of the functions they compute. We observe that the transformations given in Eqn.~\ref{eq:transformed network computation} encompasses a wide range of transformations. These include common deep learning tricks such as centering the activations to have zero mean and/or unit variance and replacing logistic sigmoid activation functions with $\tanh$. While K-FAC may not be invariant under smooth parameterizations of the model as in the case of natural gradient, the following theorem shows that it is invariant to the class of transformations given in Eqn.~\ref{eq:transformed network computation}.

\begin{Theorem}[Theorem 1 (rephrased here) in~\cite{KFAC15}]\label{KFAC invariances theorem}
Let $\NN$ be the network with parameter vector $\bw$ and activation functions $\{\phi_i\}^{L}_{i=1}$. Suppose that we have activation functions $\{\phi_i^{\dagger}\}^L_{i=1}$ as given in Eqn.~\ref{eq:transformed network computation}. Then, there exists a parameter vector $\bw^{\dagger}$ such that the transformed network $\NN^{\dagger}$ with parameter vector $\bw^{\dagger}$ and activation functions $\{\phi_i^{\dagger}\}^L_{i=1}$ computes the same function as $\NN$. Furthermore, the K-FAC updates are equivalent, in the sense that the resulting networks compute the same function.
\end{Theorem}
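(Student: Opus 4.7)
The plan is to proceed in two stages. First, exhibit a reparameterized weight vector $\bw^\dagger$ making the transformed network $\NN^\dagger$ compute the same function as $\NN$. Second, verify that one step of the K-FAC update rule (Eqn.~\ref{eq:KFAC update rule}) is covariant under this reparameterization, so that the functional equivalence is preserved. Because $\hat{\bF}(\bw)$ is block diagonal with one block per layer (Eqn.~\ref{eq:KFAC matrix}), both stages reduce to a per-layer verification.

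For the first stage, I would induct on $i$ with the hypothesis that $\bar{\ba}_{i-1}^\dagger = \bar{\bOmega}_{i-1}\bar{\ba}_{i-1}$, which matches the stated affine transformation of the input. The prescription $\phi_i^\dagger(\bz_i^\dagger)=\bOmega_i\phi_i(\bPhi_i\bz_i+\btau_i)+\bgamma_i$ forces $\bz_i^\dagger = \bPhi_i\bz_i+\btau_i$, and substituting $\bz_i^\dagger=\bar{\bW}_i^\dagger\bar{\ba}_{i-1}^\dagger$ together with $\bz_i=\bar{\bW}_i\bar{\ba}_{i-1}$ and solving yields the explicit relation $\bar{\bW}_i^\dagger = \bPhi_i\bar{\bW}_i\bar{\bOmega}_{i-1}^{-1} + [\mathbf{0}\ \btau_i]$. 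This is an invertible affine bijection between $\bar{\bW}_i$ and $\bar{\bW}_i^\dagger$; applying $\phi_i^\dagger$ then closes the induction with $\bar{\ba}_i^\dagger=\bar{\bOmega}_i\bar{\ba}_i$, and iterating through the layers produces $\bw^\dagger$.

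For the second stage, I would compute how each ingredient of the layerwise K-FAC update $\bG_i^{-1}(\nabla_{\bar{\bW}_i}h)\bA_{i-1}^{-1}$ transforms. By the chain rule $\DD\bz_i^\dagger = \bPhi_i^{-\top}\DD\bz_i$, so the Kronecker factors transform as $\bA_{i-1}^\dagger = \bar{\bOmega}_{i-1}\bA_{i-1}\bar{\bOmega}_{i-1}^\top$ and $\bG_i^\dagger = \bPhi_i^{-\top}\bG_i\bPhi_i^{-1}$, while the layer gradient transforms as $\nabla_{\bar{\bW}_i^\dagger}h = \bPhi_i^{-\top}(\nabla_{\bar{\bW}_i}h)\bar{\bOmega}_{i-1}^\top$. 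Substituting into $(\bG_i^\dagger)^{-1}(\nabla_{\bar{\bW}_i^\dagger}h)(\bA_{i-1}^\dagger)^{-1}$, the factors $\bPhi_i^\top\bPhi_i^{-\top}$ and $\bar{\bOmega}_{i-1}^\top\bar{\bOmega}_{i-1}^{-\top}$ cancel, leaving exactly $\bPhi_i\,\bG_i^{-1}(\nabla_{\bar{\bW}_i}h)\bA_{i-1}^{-1}\,\bar{\bOmega}_{i-1}^{-1}$. By the formula from the first stage, this is precisely the change $\bar{\bW}_i^\dagger$ must undergo so that it remains the image of the K-FAC-updated $\bar{\bW}_i$ under the same affine bijection; the constant offset $[\mathbf{0}\ \btau_i]$ is parameter-independent and passes through the update unchanged, so the two networks remain functionally equivalent.

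The main obstacle I anticipate is bookkeeping with the biases and the translations $\btau_i,\bgamma_{i-1}$: the K-FAC update is most naturally written on the homogenized matrix $\bar{\bW}_i$ rather than on an intrinsic affine object, so care is required to confirm that a genuinely affine (rather than merely linear) change of basis passes through the Kronecker-factored preconditioner. Once one observes that the translation part lives entirely in the constant offset $[\mathbf{0}\ \btau_i]$ and does not interact with the gradient-derived update direction, the remainder is a pair of straightforward matrix cancellations, and the block diagonal structure of $\hat{\bF}(\bw)$ assembles the layerwise arguments into the global invariance statement.
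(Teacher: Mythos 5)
Your proposal is correct in substance, but it takes a genuinely different route from the paper --- in fact it is essentially the original coordinate-dependent argument of~\cite{KFAC15} that this paper is explicitly written to replace. You construct $\bw^{\dagger}$ layer by layer via the affine bijection $\bar{\bW}_i^{\dagger}=\bPhi_i\bar{\bW}_i\bar{\bOmega}_{i-1}^{-1}+[\mathbf{0}\ \btau_i]$ and then verify by direct matrix cancellation that the Kronecker factors transform as $\bA_{i-1}\mapsto\bar{\bOmega}_{i-1}\bA_{i-1}\bar{\bOmega}_{i-1}^{\top}$ and $\bG_i\mapsto\bPhi_i^{-\top}\bG_i\bPhi_i^{-1}$, so the preconditioned update is covariant. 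The paper instead proves the theorem with no matrix computation at all: it observes (Section~\ref{subsec:coordinate-free network setup}) that $\NN$ and $\NN^{\dagger}$ are two affine parameterizations of one abstract MLP, constructs an intrinsic metric $g_{\mathrm{KFAC}}$ on $\WW$ whose coordinate representation is exactly $\hat{\bF}(\bw)$ (via the independence metric of Theorem~\ref{general KFAC theorem} and Proposition~\ref{general K-FAC metric coordinates prop}), and then invokes Theorem~\ref{NG invariances theorem} to conclude that any update of the form $g(\omega)^{-1}dh(\omega)$ is invariant. Your approach is more elementary and self-contained; the paper's approach buys generality --- the same two lines then dispose of the Gauss-Newton and generalized Gauss-Newton variants, convolutional networks, and RNNs, whereas your cancellation argument would have to be redone for each case. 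Two minor points to tidy up: your reading $\bz_i^{\dagger}=\bPhi_i\bz_i+\btau_i$ inverts the paper's convention (Eqn.~\ref{eq:change of basis for parameters} takes $(\bPhi_i\ \btau_i)$ to map transformed pre-activations back to the original ones, so $\bz_i=\bPhi_i\bz_i^{\dagger}+\btau_i$); since the class of transformations is closed under inversion this only relabels $(\bPhi_i,\btau_i)$ and does not affect the theorem, but your explicit formula for $\bar{\bW}_i^{\dagger}$ will not match Eqn.~\ref{eq:change of basis for parameters} as written. Second, your cancellation step silently assumes $\bA_{i-1}$ and $\bG_i$ are invertible; the paper makes the corresponding nondegeneracy hypothesis explicit in Theorem~\ref{abstract KFAC theorem for MLP} and discusses the damped case in a remark, and you should state the analogous assumption.
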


The proof of this theorem in~\cite{KFAC15} is dependent on the choice of a coordinate system for the network. Our central goal in this paper is to provide a coordinate-free construction of K-FAC; in particular, we like to construct a metric $g_{\mathrm{KFAC}}$ such that $\llbracket g_{\mathrm{KFAC}}(\omega)\rrbracket = \hat{\bF}(\bw)$. In this way, we may view the K-FAC update rule in Eqn.~\ref{eq:KFAC update rule} as a natural gradient update with respect to the K-FAC metric $g_{\mathrm{KFAC}}$. More importantly, by doing so, the invariance properties of K-FAC are immediately established in the same way as it was for exact natural gradient. 

\subsection{Mathematical background}~\mbox{} \label{sec:math-background}

As the coordinate-free construction of K-FAC requires mathematical machinery from both abstract linear algebra and differential geometry, we devote this section of the paper to introduce these mathematical tools. Furthermore, since we move from coordinate-independent to coordinate-dependent mathematical objects frequently in this paper, we set the notation $\llbracket\cdot\rrbracket$ here to mean choosing coordinates for an abstract object. 

\subsubsection{Vector spaces and tensor algebra.}~\mbox{} \label{subsubsec:vector spaces}

Let $V$ be a vector space. The dual space $V^{*}$ of $V$ is the set of all linear functionals on $V$ and this space itself admits the structure of a vector space. The direct sum $U\oplus V$ of two vector spaces $U$ and $V$ is a vector space where the set structure is the Cartesian product $U\times V$ and the addition and multiplication is given by
\begin{align*}
(u_1,v_1)+(u_2,v_2) & =(u_1+v_1,u_2+v_2) \\
c(u_1,v_1) &= (cu_1,cv_1),
\end{align*}
for $u_1,u_2\in U$, $v_1,v_2\in V$ and $c\in\RR$. 

We now introduce tensors on vector spaces. A $k$-tensor $T$ on $V$ is a multilinear function
\[
T:\underbrace{V\times\dots\times V}_{k\ \mathrm{copies}}\to\RR,
\]
We may think of $T$ as an element of the vector space $(V^*)^{\otimes k}$, the tensor product of the vector space $V^*$ with itself $k$-times. We delegate the definition of a tensor product of vector spaces to Appendix~\ref{Asubsec: tensor product of VS}. A $k$-tensor $T$ is symmetric if $T$ is a symmetric multilinear function. We work primarily with symmetric tensors in this paper. 

\subsubsection{Canonical isomorphisms}~\mbox{}

We describe the distinction between an isomorphism and a canonical isomorphism of vector spaces. An isomorphism between two vector spaces $U$ and $V$ is a bijection between $U$ and $V$ which preserves addition and scalar multiplication. A canonical isomorphism is a stronger concept, it is an isomorphism of vector spaces which is natural, in the sense that it does not depend on any choice of bases to define the isomorphism. For example, any two vector spaces of the same dimension are isomorphic to one another but the isomorphism may not be canonical. Consider a finite dimensional vector space $V$. There is an isomorphism between $V$ and its dual space $V^*$: given a choice of basis $\mathbf{e}_i$ for $V$, there is a dual basis $\mathbf{e}_i^*$ for $V^*$ and the map $\mathbf{e}_i\to \mathbf{e}_i^*$ is an isomorphism. However, this is not canonical as it depends on the choice of basis $\mathbf{e}_i$ for $V$. On the other hand, consider the evaluation map $\mathrm{ev}_v:V^*\to\RR$ defined by $\varphi\mapsto\varphi(v)$ where $\varphi\in V^*$, $v\in V$. The mapping $v\mapsto\mathrm{ev}_v$ then defines a canonical isomorphism from $V$ to its double dual space $V^{**}$.

\subsubsection{Affine algebra}~\mbox{}

A set $A$ is an affine space associated to the vector space $V$ if there is a mapping $A\times A\to V$ denoted by $(p,q)\in A\times A\mapsto\vec{pq}\in V$ satisfying the axioms
\begin{enumerate}
	\item for any $p,q,r\in A$, $\vec{pr}=\vec{pq}+\vec{qr}$
	\item for any $p\in A$ and for any $x\in V$ there is an unique $q\in A$ such that $x=\vec{pq}$.
\end{enumerate}
Intuitively, an affine space may be thought of as a vector space with no privileged origin. Suppose that we choose an origin point $o\in A$ and let $\{\mathbf{e}_1,\dots,\mathbf{e}_n\}$ be a basis for the associated vector space $V$. For any point $p\in A$, we can write $\vec{op}=\sum_{i=1}^{n}\bx_{i}(p)\mathbf{e}_{i}$. Here, $\{\bx_{1},\dots,\bx_{n}\}$ is a set of coordinate functions, or more simply,  a basis for $A$. If we have two bases $\{\bx_{1},\dots,\bx_{n}\}$ and $\{\by_{1},\dots,\by_{n}\}$, then they are related by $\mathbf{y=Bx+c}$ where $\mathbf{B}=[\mathbf{b}_{ij}]$ is an invertible $n\times n$ matrix and $\mathbf{c}=[\mathbf{c}_{i}]$ is a vector.

We now describe how to extend a change-of-basis on the affine space $A$ to the product space $A^K = A\times\dots\times A$. Let $\iota$ and $\kappa$ be two choices of affine bases on $A$, then 
\[
\llbracket(a_1,\dots,a_k)\rrbracket_{\iota} = (\bar{\ba}_1,\dots,\bar{\ba}_k),\ \llbracket(a_1,\dots,a_k)\rrbracket_{\kappa}= (\bar{\ba}_1^{\dagger},\dots,\bar{\ba}_k^{\dagger}),
\]
where homogeneous coordinates are used for $\ba_i$ and $\ba_i^{\dagger}$. Now, suppose that the change-of-basis from $\iota$ to $\kappa$ is given by $(\bB\ \bc)$ and denote
\[
\left[\bB\right]_H = \left[\begin{array}{cc}
\bB & \bc \\
\mathbf{0} & 1
\end{array}
\right]. 
\]
Then, we have
\begin{equation} \label{eq:change of affine basis product space}
\begin{aligned}
\left[\begin{array}{c}
\bar{\ba}_1^{\dagger} \\
\vdots \\
\bar{\ba}_k^{\dagger}
\end{array}\right] & = \left[\begin{array}{ccc}
\left[\bB\right]_H & &  \mathbf{0} \\
& \ddots & \\
\mathbf{0} & & \left[\bB\right]_H
\end{array}\right]\left[\begin{array}{c}
\bar{\ba}_1 \\
\vdots \\
\bar{\ba}_k
\end{array}\right] \\
& = (\mathbf{I}\otimes\left[\bB\right]_H)\left[\begin{array}{c}
\bar{\ba}_1 \\
\vdots \\
\bar{\ba}_k
\end{array}\right].
\end{aligned}
\end{equation}
Thus, the induced change-of-basis on the product space $A^K$ is given by the matrix $\mathbf{I}\otimes\left[\bB\right]_H$. 

\subsubsection{Differentials, pushforwards and pullbacks}~\mbox{}

Let $\MM$ be a smooth real manifold. For $p\in\MM$, we denote the tangent space by $T_p\MM$ and the corresponding dual space, the cotangent space by $T^*_p\MM$. Given a smooth function $f:\MM\to\mathbb{R}$, the differential $df(p)\in T^*_p\MM$ is defined by 
\[
df(p)(X_p)=X_p(f),\ X_p\in T_p\MM.
\]
Let $(\mathbf{x}_1,\dots,\mathbf{x}_n)$ be a coordinate system around $p\in\MM$, the differential $df(p)$ can be expressed as
\[
\llbracket df(p)\rrbracket = \left[\begin{array}{cc}
\frac{\partial f}{\partial\mathbf{x}_1} \\
\vdots \\
\frac{\partial f}{\partial\mathbf{x}_n}
\end{array}\right].
\]
We observe that this coordinate representation corresponds to the gradient $\nabla f$ (even though differentials and gradients are distinct objects for abstract manifolds).

For a smooth map $\varphi:\MM_1\to\MM_2$ of manifolds, the pushforward $\varphi_*:T_p\MM_1\to T_{\varphi(p)}\MM_2$ is defined by 
\[
\varphi_*(v)h = v(h\circ\varphi),
\]
where $v\in T_p\MM_1$ and $h:\MM_2\to\mathbb{R}$ is a smooth function on $\MM_2$. If we suppose that $\MM_1=\RR^n$
and $\MM_2=\RR^m$ with $(\mathbf{x}_1,\dots,\mathbf{x}_n)$ a coordinate system around $p$ and $(\mathbf{y}_1,\dots,\mathbf{y}_m)$ a coordinate system around $\varphi(p)$, the pushforward $\varphi_*v$ can be represented as
\[
\llbracket \varphi_*v \rrbracket = \left[\begin{array}{ccc}
\frac{\partial\mathbf{y}_1}{\partial\mathbf{x}_1} & \dots & \frac{\partial\mathbf{y}_1}{\partial\mathbf{x}_n} \\
\vdots & \ddots & \vdots \\
\frac{\partial\mathbf{y}_m}{\partial\mathbf{x}_1} & \dots & \frac{\partial\mathbf{y}_m}{\partial\mathbf{x}_n}
\end{array}\right]\mathbf{v},
\]
where $\mathbf{v}=\llbracket v\rrbracket$. This is exactly the Jacobian matrix $\mathbf{J}_{\varphi}$ of $\varphi$ and hence $\llbracket\varphi_*v\rrbracket = \mathbf{J}_{\varphi}\mathbf{v}$. This Jacobian-vector product corresponds to the directional derivative, and can be computed using forward mode automatic differentiation~\cite{schraudolph2002fast}.

The dual notion of the pushforward, the pullback $\varphi^*:T^*_{\varphi(p)}\MM_2\to T^*_p\MM_1$ is defined in the following way
\[
(\varphi^*u)(v)=u(\varphi_*v),\ u\in T^*_{\varphi(p)}\MM_2.
\]
With respect to the same coordinate systems chosen above, we can write $\llbracket\varphi^*u\rrbracket = \mathbf{J}^{\top}_{\varphi}\mathbf{u}$, where $\mathbf{u}=\llbracket u\rrbracket$. Numerically, we can compute $\mathbf{J}^{\top}_{\varphi}\mathbf{u}$ efficiently using reverse mode auto-differentiation (i.e., backpropagation). 

\subsubsection{Metrics and their properties}~\mbox{}

We introduce tensors on manifolds. A symmetric $k$-tensor $\sigma$ at the point $p\in\MM$ is defined as a symmetric $k$-tensor on the tangent space $T_p\MM$. Recall that this is a symmetric multilinear map on the $k$-fold product of $T_p\MM$:
\[
\sigma(p):\underbrace{T_p\MM\times\dots\times T_p\MM}_{k\ \mathrm{copies}}\to\RR.
\] 

A metric on $\MM$ is defined as a smoothly varying symmetric $2$-tensor $g$ which is positive-semidefinite at every point $p\in\MM$. Note that our definition of a metric allows the possibility of it being degenerate. If $g$ is nondegenerate, then this is just a usual Riemannian metric. However, for the remainder of this paper, we use the term \enquote{nondegenerate} rather than \enquote{Riemannian} to describe such metrics. 

In later sections, we pull back metrics from the output space to the weight space of the network. Here, we define how this works for general tensors. Let $\varphi:\MM_1\to\MM_2$ be a smooth map of manifolds and $\sigma$ be a symmetric $k$-tensor on $\MM_2$ at $\varphi(p)$. The pullback $\varphi^*\sigma$ of $\sigma$ under $\varphi$ is a symmetric $k$-tensor on $\MM_1$ defined as
\[
\varphi^*\sigma(p)(v_1,\dots,v_k) = \sigma(\varphi(p))(\varphi_*v_1,\dots,\varphi_*v_k),
\]
where $v_1,\dots,v_k\in T_p\MM_1$. In the case of metrics, 
\[
\varphi^*g_{\MM_2}(p)(v_1,v_2) = g_{\MM_2}(\varphi(p))(\varphi_*v_1,\varphi_*v_2).
\]
If we suppose that the metric $g_{\MM_2}$ is given by $\mathbf{G}_{\MM_2}$ for a chosen coordinate system around $\varphi(p)$, then the pullback metric $\varphi^*g_{\MM_2}$ on $\MM_1$ around $p$ is given by
\[ 
\llbracket\varphi^*g_{\MM_2}(p)\rrbracket = \mathbf{J}_{\varphi}^{\top}\mathbf{G}_{\MM_2}\mathbf{J}_{\varphi},
\]
where $\mathbf{J}_{\varphi}$ is the Jacobian of $\varphi$. While a metric always pulls back to a metric under a smooth map, the pullback of a nondegenerate metric can be degenerate as the pushforward map may have a non-trivial nullspace. 

\section{coordinate-free k-fac} \label{sec:abstract-mlp}

\subsection{Coordinate-free Multilayer Perceptrons} \label{subsec:coordinate-free network setup}~\mbox{}

We observe that MLPs consist of a sequence of affine transformations and activation functions in alternation. In order to capture this structure, we treat the spaces of activations and pre-activations as affine spaces. Note that this introduces more structure than was assumed when we discussed the exact natural gradient in Section~\ref{subsubsec:NG invariances}; in that section, we treated the space of network parameters as a general smooth manifold. Here, the network weights and biases are assumed to define affine transformations. The set of allowable reparameterizations (and hence, the desired set of invariances) is correspondingly more limited (though still very broad). 

We now present the coordinate-free MLP formally. For $i\in\{1,\dots,L\}$, we have
\begin{itemize}
	\item Activations are taken to be elements $\alpha_{i-1}$ in an affine space $\AA_{i-1}$.
	\item Pre-activations are taken to be elements $\zeta_i$ in an affine space $\ZZ_i$.
	\item Layerwise parameters are affine transformations $\omega_i$ between $\AA_{i-1}$ and $\ZZ_i$. The collection of these transformations is an affine space in its own right, which we denote by $\WW_i$ and refer to as the layerwise weight space.
	\item The weight space is given by the direct product $\WW=\WW_1\times\dots\times\WW_L$. Elements in this space are written as $\omega=(\omega_1,\dots,\omega_L)\in\WW$.
	\item Input and outputs are denoted by $\xi$ and $f(\xi,\omega)$ respectively. The space of all inputs and outputs are affine spaces denoted by $\XX (=\AA_0)$ and $\YY(=\AA_L)$ respectively. 
\end{itemize}
Moreover, the layerwise computation is given by
\begin{equation} \label{eq:abstract feed-forward computation}
\begin{aligned}
\zeta_i & = \omega_i(\alpha_{i-1}) \\
\alpha_i & = \rho_i(\zeta_i),
\end{aligned}
\end{equation}
where $\rho_i:\RR\to\RR$ is a fixed nonlinear activation function which is assumed to be smooth throughout. 

We highlight the power and flexibility of formulating MLPs in coordinate-free language. Suppose that the activation function $\rho_i$ is the logistic sigmoid,
\[
\rho_i(x) = \frac{e^x}{1+e^x}. 
\]
Another common activation function is $\tanh$, 
\[
\tanh(x) = \frac{e^x-e^{-x}}{e^x+e^{-x}}.
\]
An easy computation shows that $\tanh(x)=2\rho_i(2x)-1$ which means that $\tanh$ and logistic sigmoid are related to each other by an affine transformation. We can identify the pre-activation spaces for logistic and $\tanh$ networks using the isomorphism $x\mapsto 2x-1$. Similarly, we can identify the activation spaces using the isomorphism $x\mapsto\frac{1}{2}x$. Hence, the logistic and $\tanh$ architectures can be viewed as a single abstract MLP architecture with different choices of bases.

Now, a choice of parameterization, or a coordinate system, for the abstract MLP is a choice of affine bases for all of the activation spaces $\AA_1,\dots,\AA_{L-1}$, the pre-activation spaces $\ZZ_1,\dots,\ZZ_L$, the input space $\XX$ and the output space $\YY$ in the network. Observe that a choice of bases for $\AA_{i-1}$ and $\ZZ_i$ naturally induces a basis for each $\WW_i$, and therefore also for the full weight space $\WW$. Let $\iota,\kappa$ be two different choices of parameterizations for the network. With respect to $\iota$, we write
\[
\llbracket\alpha_{i-1}\rrbracket_{\iota}=\ba_{i-1},\ \llbracket\zeta_i\rrbracket_{\iota} = \bz_i,\ \llbracket\omega_i\rrbracket_{\iota} = (\bW_i\ \bb_i),\ \llbracket\rho_i\rrbracket_{\iota} = \phi_i,\ \llbracket\alpha_i\rrbracket_{\iota} = \ba_i,
\]
and with respect to $\kappa$, we write
\[
\llbracket\alpha_{i-1}\rrbracket_{\kappa}=\ba_{i-1}^{\ddagger},\ \llbracket\zeta_i\rrbracket_{\kappa} = \bz_i^{\ddagger},\ \llbracket\omega_i\rrbracket_{\kappa} = (\bW_i^{\ddagger}\ \bb_i^{\ddagger}),\ \llbracket\rho_i\rrbracket_{\kappa} = \phi_i^{\ddagger},\ \llbracket\alpha_i\rrbracket_{\kappa} = \ba_i^{\ddagger}.
\]
Hence, we can rewrite Eqn.~\ref{eq:abstract feed-forward computation} in the parameterizations $\iota,\kappa$ as
\begin{equation} \label{eq:computation in two coordinate systems}
\begin{aligned}
\bz_i & = \bW_i\ba_{i-1} + \bb_i & \bz_i^{\ddagger} & = \bW_i^{\ddagger}\ba_{i-1}^{\ddagger} + \bb_i^{\ddagger}  \\
\ba_i & = \phi_i(\bz_i), & \ba_i^{\ddagger} & = \phi_i^{\ddagger}(\bz_i^{\ddagger}).
\end{aligned}
\end{equation}
The parameters $(\bW_i\ \bb_i)$ and $(\bW_i^{\ddagger}\ \bb_i^{\ddagger})$ are related as follows
\begin{equation} \label{eq:change of basis for parameters}
\begin{aligned}
\bW_i & = \bPhi_i\bW_i^{\ddagger}\bOmega_{i-1} \\
\bb_i & = \bPhi_i\bW_i^{\ddagger}\bgamma_{i-1} + \bPhi_i\bb_i^{\ddagger} + \btau_i,
\end{aligned}
\end{equation}
where $(\bOmega_{i-1}\ \bgamma_{i-1})$ is the change-of-basis from $\iota$ to $\kappa$ on $\AA_{i-1}$ with $\bOmega_{i-1}$ an invertible matrix and $\bgamma_{i-1}$ a vector. Moreover, $(\bPhi_i\ \btau_i)$ is the change-of-basis from $\kappa$ to $\iota$ on $\ZZ_i$ with $\bPhi_i$ an invertible matrix and $\btau_i$ a vector. The activation functions $\phi_i$ and $\phi_i^{\ddagger}$ in Eqn.~\ref{eq:computation in two coordinate systems} are related in the following way:
\begin{equation} \label{eq:change of basis for activation functions}
\ba_i^{\ddagger} = \phi_i^{\ddagger}(\bz_i^{\ddagger}) = \bOmega_i\phi_i(\bPhi_i\bz_i^{\ddagger} + \btau_i) + \bgamma_i,
\end{equation}
where $(\bOmega_i\ \bgamma_i)$ is the change-of-basis from $\iota$ to $\kappa$ on $\AA_i$. The equations given in Eqn.~\ref{eq:change of basis for parameters} and Eqn.~\ref{eq:change of basis for activation functions} are standard change of variables formulas and we relegate their derivations to Appendix~\ref{Asubsec:change of affine basis computations}. These equations in Eqn.~\ref{eq:computation in two coordinate systems} can also be rewritten in homogeneous coordinates:
\begin{align*}
\bz_i & = \bar{\bW}_i\bar{\ba}_{i-1} & \bz_i^{\ddagger} & = \bar{\bW}_i\bar{\ba}_{i-1}^{\ddagger} \\
\ba_i & = \phi_i(\bz_i), & \ba_i^{\ddagger} & = \phi_i^{\ddagger}(\bz_i^{\ddagger}) \\
& & & = \bOmega_i\phi_i(\bPhi_i\bz_i^{\ddagger} + \btau_i) + \bgamma_i.
\end{align*}

The left hand set of equations above is identical to the original MLP computation given in Eqn.~\ref{eq:homogeneous feed-forward computation}. The right hand set of equations is identical to the transformed computation given in Eqn.~\ref{eq:transformed network computation}. Thus, we arrive at a very important point here: the MLP with computation defined in Eqn.~\ref{eq:homogeneous feed-forward computation} and the transformed version in Eqn.~\ref{eq:transformed network computation} simply correspond to two different choices of parameterizations for the same underlying abstract MLP. 

\subsection{Optimization problem for abstract networks.}~\mbox{}

The optimization problem in the abstract setting is analogous to the coordinate-dependent one. Let $(\xi,\upsilon)$ be an abstract input-target pair and $\LL(\upsilon,f(\xi,\omega))$ be the loss function measuring the disagreement between outputs $f(\xi,\omega)$ of the abstract MLP and targets $\upsilon$. Given a training set $\mathcal{S}$ of abstract input-target pairs $(\xi_i,\upsilon_i)$, the objective function we wish to minimize here is
\[
h(\omega) = \frac{1}{|\mathcal{S}|}\sum_{(\xi_i,\upsilon_i)\in\mathcal{S}}\LL(\upsilon_i,f(\xi_i,\omega)).
\]

\begin{Theorem} \label{NG invariances theorem}
Let $g$ be a nondegenerate metric on the weight space $\WW$ of an abstract MLP. For a chosen learning rate $\epsilon > 0$, the following update rule
\[
\omega\leftarrow\omega-\epsilon g(\omega)^{-1}dh(\omega),
\]
is~\emph{exactly} invariant to all affine reparameterizations of the model.
\end{Theorem}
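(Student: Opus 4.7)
The plan is to show that every ingredient of the update rule $\omega \leftarrow \omega - \epsilon g(\omega)^{-1} dh(\omega)$ is intrinsic to the affine structure of $\WW$, so that applying it in two different affine parameterizations of the network produces the same abstract point $\omega'$. Unlike the general smooth-manifold case, where dropping the exponential map costs exact invariance, the affine setting will give \emph{global} (not merely first-order) invariance.

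First, I would observe that the weight space $\WW = \WW_1 \times \cdots \times \WW_L$ is itself an affine space: each $\WW_i$ is affine as the space of affine maps $\AA_{i-1} \to \ZZ_i$, and affine structure is preserved under direct products. Moreover, an affine change of basis on any $\AA_i$ or $\ZZ_i$ induces an affine change of basis on $\WW$ (as encoded by Eqn.~\ref{eq:change of basis for parameters}), so the class of admissible reparameterizations of $\WW$ as an affine space matches exactly the class of affine reparameterizations of the network. In any affine space, the tangent space $T_\omega \WW$ at every point is canonically identified with the associated vector space $V_\WW$, with no dependence on the choice of basis. Under this identification, $\omega - \epsilon v$ for $\omega \in \WW$ and $v \in T_\omega\WW$ denotes the unique point of $\WW$ obtained by translating $\omega$ by the displacement $-\epsilon v \in V_\WW$, and this operation is intrinsic to the affine structure.

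Next, I would verify that the tangent vector $v := g(\omega)^{-1} dh(\omega)$ is itself intrinsic. The differential $dh(\omega) \in T^*_\omega\WW$ is the coordinate-free covector associated to the smooth function $h$, and $g(\omega)^{-1}: T^*_\omega\WW \to T_\omega\WW$ is the musical isomorphism determined by the nondegenerate symmetric $2$-tensor $g(\omega)$, as recalled in Section~\ref{subsubsec:NG invariances}; both are defined purely in terms of intrinsic data on $\WW$. Composing these observations, the update is a sequence of coordinate-free operations applied to intrinsic objects, so the resulting abstract point $\omega'$ does not depend on the affine basis.

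The only genuinely subtle point, which I would flag as the main obstacle, is explaining why we obtain \emph{exact} rather than merely first-order invariance. The strictly intrinsic update on a Riemannian manifold is $\omega \leftarrow \Exp_\omega(-\epsilon g(\omega)^{-1} dh(\omega))$, and replacing $\Exp_\omega$ by the linear step $v \mapsto \omega + v$ generally destroys global invariance. Here, however, the affine structure equips $\WW$ with a canonical flat connection whose exponential map at every point is precisely affine translation; the linear update therefore \emph{coincides} with the true Riemannian update rather than approximating it, and no invariance is lost. A direct coordinate-level sanity check using the change-of-basis formulas in Eqns.~\ref{eq:change of basis for parameters}--\ref{eq:change of affine basis product space} together with the pullback formula $\llbracket \varphi^* g\rrbracket = \bJ_\varphi^\top \bG \bJ_\varphi$ can be carried out, but it is logically superfluous once the coordinate-free argument is in place.
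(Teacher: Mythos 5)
Your proposal is correct and takes essentially the same route as the paper's proof: both arguments rest on the canonical identification of $T_\omega\WW$ with the vector space underlying the affine space $\WW$ (Corollary~\ref{tangent spaces of affine spaces corollary}), which makes the translation $\omega\mapsto\omega-\epsilon\, g(\omega)^{-1}dh(\omega)$ an intrinsic, basis-independent operation and hence yields exact invariance. Your one imprecision --- the claim that this linear step \emph{coincides} with the Riemannian exponential of $g$, which would require $g$ to be parallel for the flat affine connection --- is inessential to the argument (exactness needs only that affine translation is coordinate-free, not that it agrees with $\Exp_\omega$), and the paper's own proof makes the same loose identification.
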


\begin{proof}
First, note that $g(\omega)^{-1}dh(\omega)$ is an intrinsically defined tangent vector on $\WW$. The weight space $\WW$ of a MLP is an affine space, and hence by Corollary~\ref{tangent spaces of affine spaces corollary} in Appendix, the tangent space of $\WW$ at every point is canonically isomorphic to the vector space naturally associated to $\WW$. Thus, the exponential map $\Exp_{\omega}$ corresponds to the above update rule. Since this construction did not require choosing an affine basis for $\WW$, the algorithm is invariant to affine reparameterizations.
\end{proof}

We describe the consequences of this theorem more concretely using the parameterizations $\iota,\kappa$ given earlier in Section~\ref{subsec:coordinate-free network setup}. Suppose that 
\[
\llbracket\omega\rrbracket_{\iota} = \bw,\ \llbracket\omega\rrbracket_{\kappa} = \bw^{\ddagger},\ \llbracket g(\omega)^{-1}\rrbracket_{\iota} = \bG(\bw)^{-1},\ \llbracket g(\omega)^{-1}\rrbracket_{\kappa} = \bG^{\ddagger}(\bw^{\ddagger})^{-1}.
\]
Furthermore, 
\[
\llbracket g(\omega)^{-1}dh(\omega)\rrbracket_{\iota} = \llbracket g(\omega)^{-1}\rrbracket_{\iota}\llbracket dh(\omega)\rrbracket_{\iota} = \bG(\bw)^{-1}\nabla h(\bw),
\]
and analogously for $\kappa$,
\[
\llbracket g(\omega)^{-1}dh(\omega)\rrbracket_{\kappa} = \llbracket g(\omega)^{-1}\rrbracket_{\kappa}\llbracket dh(\omega)\rrbracket_{\kappa} = \bG^{\ddagger}(\bw^{\ddagger})^{-1}\nabla h(\bw^{\ddagger}).
\]
The above theorem shows that the update $\bw\leftarrow\bw-\epsilon\bG(\bw)^{-1}\nabla h(\bw)$ is equivalent to the update $\bw^{\ddagger}\leftarrow\bw^{\ddagger}-\epsilon\bG^{\ddagger}(\bw^{\ddagger})^{-1}\nabla h(\bw^{\ddagger})$, in that the functions computed by the resulting networks are identical. Note that the resulting networks are \emph{exactly} equivalent, in contrast to using the natural gradient in Eqn.~\ref{eq:NGD update rule}, where the equivalence only held up to the first-order as explained at the end of Section~\ref{subsubsec:NG invariances}. Also, observe that this result holds for arbitrary metrics, not just the Fisher metric; we'll make use of this when we analyze the K-FAC metric.

\subsection{Pullback of output metrics to parameter spaces}~\mbox{} \label{subsec:pullback of metrics}

Consider a metric $g$ on the output space $\YY$ of the MLP. Let $\Psi_{\xi}:\WW\to\YY$ be the smooth map which sends parameters $\omega$ to outputs $\alpha_L=f(\xi,\omega)$ given an input $\xi$. The pullback $\Psi_{\xi}^*g$ defines a metric on $\WW$. The expected pullback metric over inputs, under a choice of coordinates around $\omega$ and $\alpha_L$, is given by
\[
\llbracket\EE_{\xi}[\Psi_{\xi}^*g(\alpha_L)]\rrbracket = \EE_{\bx}[\bJ_{\Psi_{\xi}}^{\top}\bG\bJ_{\Psi_{\xi}}],
\]
where $\bG$ is the representation of $g$  in these coordinates. We now observe how this construction encompasses a variety of examples.

\begin{Example}[Fisher metric] \label{Fisher metric example}
Suppose that the outputs $\alpha_L$ parameterize the model's predictive distribution $R_{\upsilon|\alpha_L}$. Let $r(\upsilon|\alpha_L)$ denote the density function of this distribution and furthermore, we take the loss function here to be the negative log-likelihood $\LL_{\alpha_L}=-\log r(\upsilon|\alpha_L)$. The output Fisher metric $g_{F,\mathrm{out}}$ on $\YY$ is defined as
\[
g_{F,\mathrm{out}}(\alpha_L) = \EE_{\upsilon}[d\LL_{\alpha_L}\otimes d\LL_{\alpha_L}],
\]
where the expectation is taken with respect to the predictive distribution $R_{\upsilon|\alpha_L}$. Computing the expectation of $\Psi_{\xi}^*g_{F,\mathrm{out}}$ over the inputs $\xi$ gives
\begin{align*}
\EE_{\xi}[\Psi_{\xi}^*g_{F,\mathrm{out}}(\omega)] & = \EE_{\xi}[\Psi_{\xi}^* \EE_{\upsilon}[d\LL_{\alpha_L}\otimes d\LL_{\alpha_L}]] \\
& = \EE_{\xi,\upsilon}[\Psi_{\xi}^*(d\LL_{\alpha_L}\otimes d\LL_{\alpha_L})] \\
& = \EE_{\xi,\upsilon}[\Psi_{\xi}^*(d\LL_{\alpha_L})\otimes\Psi_{\xi}^*(d\LL_{\alpha_L})] \\
& = \EE_{\xi,\upsilon}[d\LL_{\omega}\otimes d\LL_{\omega}].
\end{align*}
This is exactly the Fisher metric defined earlier in Eqn.~\ref{eq:Fisher metric on parameter space}.
\end{Example}

\begin{Example}[Gauss-Newton] \label{Gauss-Newton metric example}
Let $g_E$ be the Euclidean metric on $\YY$. Upon a choice of coordinate system, $g_E$ can be represented by the identity matrix. Then, the pullback $\Psi_{\xi}^*g_E(\omega)$ is
\[
\llbracket\Psi_{\xi}^*g_E(\omega)\rrbracket = \bJ^{\top}_{\Psi_{\xi}}\bJ_{\Psi_{\xi}}.
\] 
Now, the expectation of $\Psi_{\xi}^*g_E$ over inputs $\xi$ in these chosen coordinates is 
\begin{align*}
\llbracket\EE_{\xi}[\Psi_{\xi}^*g_E(\omega)] \rrbracket & = \EE_{\bx}[\llbracket\Psi_{\xi}^*g_E(\omega)\rrbracket] \\
& = \EE_{\bx}[\bJ^{\top}_{\Psi_{\xi}}\bJ_{\Psi_{\xi}}].
\end{align*}
We note that this is exactly the standard Gauss-Newton matrix \cite{Martens14}. One use case for the Gauss-Newton metric is when the outputs of the network do not have a natural probabilistic interpretation, e.g.~the value network in an actor-critic architecture for reinforcement learning \cite{wu2017scalable}.
\end{Example}

\begin{Example}[Generalized Gauss-Newton]
Let $F:\YY\to\RR$ be a strictly convex twice-differentiable function. The Bregman divergence $B_F:\YY\times\YY\to\RR^{+}$ is defined as
\[
B_F(y,y') = F(y) - F(y') - \langle\nabla F(y), y-y'\rangle. 
\]
The second-order Taylor approximation of this divergence is given by the Hessian of $F$, $\mathbf{H}_F = \nabla^2F$. This defines a metric on $\YY$. The pullback to $\WW$ in coordinates is the matrix $\bJ^{\top}_{\Psi_{\xi}}\mathbf{H}_F\bJ_{\Psi_{\xi}}$. Then, taking the expectation over inputs yields
\[
\EE_{\bx}[\bJ^{\top}_{\Psi_{\xi}}\mathbf{H}_F\bJ_{\Psi_{\xi}}],
\]
which is exactly the generalized Gauss-Newton matrix \cite{Martens14}.
\end{Example}

\subsection{Independence metric} \label{subsec:independence metric}~\mbox{}

We now come to the heart of our paper: the construction of a metric inspired by the K-FAC approximation. Recall that K-FAC makes two approximations to obtain a tractable Fisher matrix: (1) it assumes independence of activations and pre-activation derivatives in order to push the expectation inside the Kronecker product (Eqn.~\ref{eq:KFAC decomposition blocks}), and (2) it keeps only the diagonal blocks corresponding to individual layers. In this section, we develop a coordinate-free way to push the expectation inside the Kronecker product, thereby obtaining an approximate metric we term the \emph{independence metric}. (We later use this construction to develop approximate metrics for MLPs, convolutional networks, and RNNs.) In Section~\ref{subsec:KFAC metric}, we develop a coordinate-free version of the block-diagonal approximation. Combining both approximations yields the \emph{K-FAC metric}, an intrinsic metric whose coordinate representation matches the K-FAC approximate Fisher matrix.

We begin by setting up the mathematical framework. To avoid tying ourselves to MLPs, we consider the more general setting of metrics on affine maps between affine spaces, but use notation which is suggestive of MLPs. We assume the following:
\begin{itemize}
	\item Affine spaces $A$ and $Z$
	\item Affine space $W$ of affine transformations between $A$ and $Z$
	\item Metric $g$ on $Z$
\end{itemize}

Our first task is to formulate a coordinate-free analogue of the outer product of homogenized activations, $\bar{\mathbf{a}}_i \bar{\mathbf{a}}_i^\top$. Consider the evaluation map $\psi_a:W\to Z$ which is defined by evaluating $w$ at $a$. We compute the pushforward $\psi_{a*}:TW\to TZ$. Note that there is no need to specify particular points for the tangent spaces here since we are working with affine spaces (see Corollary~\ref{tangent spaces of affine spaces corollary} in Appendix). Let $\partial_w$ be a tangent vector on $W$ and $f$ be a smooth function on $Z$. Then,
\begin{align*}
\psi_{a*}(\partial_w)(f) & = \partial_w(f\circ\psi_a)(w) \\
& = (\partial_w f)(\psi_a(w))\cdot\psi_a'(w) \\
& = (\partial_w f)(z)\cdot a
\end{align*}
This shows that the pushforward $\psi_{a*}$ is exactly multiplication by the element $a$. Hence, we can identify any $a\in A$ with its linear map $TW\to TZ$. Thus, this enables us to define the tensor product of two elements in $A$ as a mapping $a_1 \otimes a_2 : TW\times TW\to TZ\otimes TZ$:
\[
(a_1 \otimes a_2)(\partial_{w_1},\partial_{w_2}) = a_1(\partial_{w_1}) \otimes a_2(\partial_{w_2}). 
\]
We now introduce the central object of our study, inspired by the independence assumption for activations and pre-activation derivatives which led to Eqn.~\ref{eq:KFAC decomposition blocks}.  For $w\in W$, define $g_{\mathrm{ind}}$ on $W$ to be
\begin{equation} \label{eq:abstract KFAC metric}
g_{\mathrm{ind}}(w) = \EE[a\otimes a]\otimes \EE[g(z)],
\end{equation}
where the first expectation is over $A$ and the second one is over $Z$. Note that $\EE[g(z)]$ is well defined because the affine structure of $Z$ allows us to identify the cotangent spaces at all points $z$. Our goal is to show that $g_{\mathrm{ind}}$ is a metric on $W$. Before doing so, we  establish the following lemma:

\begin{Lemma} \label{general KFAC metric lemma}
Let $g$ be a metric on $Z$ and $\psi_a: W\to Z$ be the evaluation map. Then, the pullback metric $\psi_a^*g$ on $W$ can be expressed as:
\[
\psi_a^*g(w) = \EE[a\otimes\phi\otimes a\otimes\phi],
\]
where $\phi$ is a random covector. 
\end{Lemma}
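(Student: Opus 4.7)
The plan is to unwind the definition of the pullback, substitute a representation of $g$ as an expectation of an outer product of a random covector with itself, and then re-collect tensor factors using the identification of $a \in A$ with a linear map $TW \to TZ$ that was established in the paragraph preceding the lemma.

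First I would apply the pullback formula directly. For tangent vectors $\partial_{w_1}, \partial_{w_2}$ we have
\[
\psi_a^*g(w)(\partial_{w_1},\partial_{w_2}) \;=\; g(\psi_a(w))\bigl(\psi_{a*}\partial_{w_1},\,\psi_{a*}\partial_{w_2}\bigr),
\]
and the pushforward computation immediately above the lemma gives $\psi_{a*}\partial_w = a(\partial_w)$, i.e.\ multiplication by $a$ viewed as a linear map $TW \to TZ$. Note that because $Z$ is an affine space, Corollary~\ref{tangent spaces of affine spaces corollary} lets us identify all tangent (and hence cotangent) spaces of $Z$ with one fixed vector space, which will be needed so that a ``random covector on $Z$'' makes sense without tracking a basepoint.

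Next I would use the fact that any symmetric positive-semidefinite $2$-tensor on a finite-dimensional space can be written as $\EE[\phi \otimes \phi]$ for a suitable random covector $\phi$ (in the main Fisher example this representation is already the definition of $g_{F,\mathrm{out}}$ with $\phi = d\mathcal{L}_{\alpha_L}$; in general one can diagonalise $g$ and scale independent mean-zero unit-variance scalars by the square roots of the eigenvalues). Substituting gives
\[
\psi_a^*g(w)(\partial_{w_1},\partial_{w_2}) \;=\; \EE\bigl[\phi(a(\partial_{w_1}))\cdot\phi(a(\partial_{w_2}))\bigr].
\]
Under the identification of $a$ with a map $TW\to TZ$, the composition $\phi \circ a$ is a covector on $W$ which, in the tensorial shorthand introduced just before the lemma, is written $a \otimes \phi$, acting by $\partial_w \mapsto \phi(a(\partial_w))$. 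The integrand therefore equals $\bigl((a\otimes\phi)\otimes(a\otimes\phi)\bigr)(\partial_{w_1},\partial_{w_2})$, and taking the expectation yields the claimed identity $\psi_a^* g(w) = \EE[a\otimes \phi \otimes a \otimes \phi]$.

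The only non-mechanical step is the representation of $g(z)$ as $\EE[\phi\otimes\phi]$; everything else is unpacking the notation that $a$ denotes both an element of $A$ and the induced linear map $TW\to TZ$. The potential obstacle — making sure that a single random covector $\phi$ can be used uniformly over all points of $Z$ — is resolved by the affine structure, which canonically identifies the cotangent spaces at different points.
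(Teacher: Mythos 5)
Your proposal is correct and follows essentially the same route as the paper's proof: represent $g$ as $\EE[\phi\otimes\phi]$ for a random covector $\phi$, observe that the pullback of $\phi$ under $\psi_a$ is $\phi\circ a = a\otimes\phi$ via the identification of $a$ with a linear map $TW\to TZ$, and collect the tensor factors inside the expectation. Your added justification for the rank-one decomposition (diagonalizing and scaling independent unit-variance scalars) is a welcome elaboration of a step the paper only asserts by analogy with PSD matrices.
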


\begin{proof}
Given $z\in Z$, the metric $g$ admits the rank-1 decomposition
\[
g(z)=\EE[\phi\otimes\phi],
\]
where $\phi$ is a covector and the expectation is over $Z$. This is akin to the more familiar case where any symmetric positive-semidefinite matrix admits a rank-1 spectral decomposition. Computing the pullback of $g$ under the map $\psi_a$ now gives
\begin{equation} \label{eq:pullback of metric g on Z}
\begin{aligned}
\psi_a^*g(w) & = \psi_a^*\EE[\phi\otimes\phi] \\
& = \EE[\psi_a^*(\phi\otimes\phi)] \\
& = \EE[\psi_a^*\phi\otimes\psi_a^*\phi],
\end{aligned}
\end{equation}
where $z=\psi_a(w)$. We analyze the pullback $\psi_a^*\phi$. Let $\partial_w$ be a tangent vector on $W$. Then,
\begin{align*}
(\psi_a^*\phi)(\partial_w) & = \phi((\psi_{a*})\partial_w) \\
& = \phi(\partial_w\cdot a) \\
& = (a\otimes\phi)(\partial_w),
\end{align*}
which shows that $\psi_a^*\phi = a\otimes\phi$. Plugging this back into Eqn.~\ref{eq:pullback of metric g on Z}, we obtain
\[
\psi_a^*g(w) = \EE[a\otimes\phi\otimes a\otimes\phi],
\]
which concludes the proof.
\end{proof}

\begin{Theorem} \label{general KFAC theorem}
Let $g$ be a metric on $Z$ and $\psi_a:W\to Z$ be the evaluation map. Then, $g_{\mathrm{ind}}$ as defined in Eqn.~\ref{eq:abstract KFAC metric} is a metric on $W$. Moreover, if the expected pullback metric $\EE_a[\psi_a^*g]$ is nondegenerate on $W$, then $g_{\mathrm{ind}}$ is also nondegenerate. From now on, we refer to $g_{\mathrm{ind}}$ as the~\emph{independence metric}.
\end{Theorem}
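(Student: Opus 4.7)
The plan is to establish three things in sequence: (i) $g_{\mathrm{ind}}$ is a symmetric $2$-tensor on $W$, (ii) it is positive semi-definite, and (iii) it is nondegenerate whenever $\EE_a[\psi_a^*g]$ is. Parts (i) and (ii) will reuse the rank-$1$ decomposition of $g(z)$ extracted in the proof of Lemma~\ref{general KFAC metric lemma}, while (iii) will reduce to a standard kernel computation for Kronecker products together with a short construction of witness tangent vectors.

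For (i) and (ii), the first task is to make the double tensor product in Eqn.~\ref{eq:abstract KFAC metric} precise: the factor $\EE[a\otimes a]$ sends a pair of tangent vectors on $W$ to an element of $TZ\otimes TZ$ via the identification $a\colon TW\to TZ$ established just before Lemma~\ref{general KFAC metric lemma}, while $\EE[g(z)]$ lives in $T^*Z\otimes T^*Z$; the outer $\otimes$ in the definition really denotes the contraction of the $TZ$ slots against the $T^*Z$ slots. Writing $g(z)=\EE_\phi[\phi\otimes\phi]$ and unwinding, I would obtain
\[
g_{\mathrm{ind}}(w)(\partial_{w_1},\partial_{w_2})=\EE_a\,\EE_{z,\phi}\!\left[\phi(a(\partial_{w_1}))\,\phi(a(\partial_{w_2}))\right],
\]
where the outer and inner expectations are taken \emph{independently}. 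Symmetry in $\partial_{w_1},\partial_{w_2}$ is then immediate, and the diagonal $g_{\mathrm{ind}}(w)(\partial_w,\partial_w)$ is a square, hence nonnegative; this establishes PSD. The contrast with the pullback formula in Lemma~\ref{general KFAC metric lemma} (which has a \emph{coupled} expectation over $a$ and $\phi$) is exactly the independence assumption underlying K-FAC.

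For (iii), I argue by contrapositive. In any affine coordinate system $g_{\mathrm{ind}}$ is represented by the Kronecker product $\bA\otimes\bG$ of the PSD matrices $\bA=\EE[\bar{\ba}\bar{\ba}^{\top}]$ and $\bG=\EE[g(z)]$, and the Kronecker product of two PSD matrices is nondegenerate iff both factors are. Assuming $g_{\mathrm{ind}}$ degenerate, I handle two cases. If $\bA$ has a nonzero kernel vector $\bar{\ba}_0$, then $\bar{\ba}^{\top}\bar{\ba}_0=0$ almost surely, and taking $\partial w = v\otimes\bar{\ba}_0^{*}$ for any nonzero $v$ in the vector space associated to $Z$ yields $\psi_{a*}\partial w=0$ a.s., so $\EE_a[\psi_a^*g](\partial w,\partial w)=0$. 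If instead $\bG$ has a nonzero kernel vector $v_0$, then $g(z)(v_0,v_0)=0$ a.s., and taking $\partial w$ to be the constant affine map $A\to V_Z$ with value $v_0$ (coordinate-freely, $v_0$ tensored with the distinguished homogeneous-coordinate covector dual to the affine structure of $A$) gives $\psi_{a*}\partial w=v_0$ for every $a$, so again $\EE_a[\psi_a^*g](\partial w,\partial w)=0$. Either way, $\EE_a[\psi_a^*g]$ is degenerate, contradicting the hypothesis.

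The main obstacle I anticipate is the bookkeeping in (i): making the abstract expression $\EE[a\otimes a]\otimes\EE[g(z)]$ precise as a bilinear form on $TW$, and verifying that the two implicit contractions behave correctly under the affine structure of $W$. Once that tensor-contraction is pinned down cleanly, (ii) is immediate from the square-integrand formula above, and (iii) is a short kernel argument for Kronecker products combined with the two explicit witness constructions.
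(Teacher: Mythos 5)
Your proof is correct, and while it follows the same overall skeleton as the paper's (the rank-one decomposition $g(z)=\EE[\phi\otimes\phi]$ from Lemma~\ref{general KFAC metric lemma}, followed by a null-vector argument for nondegeneracy), the execution differs in two places worth noting. For positive semi-definiteness, the paper checks separately that each Kronecker factor $\EE[a\otimes a]$ and $\EE[g(z)]$ is a symmetric PSD $2$-tensor and implicitly relies on the fact that the tensor product of PSD forms is again PSD; your single formula
\[
g_{\mathrm{ind}}(w)(\partial_w,\partial_w)=\EE_a\,\EE_{z,\phi}\bigl[\phi(a(\partial_w))^2\bigr]
\]
is more direct --- the integrand is a square, so PSD-ness holds on all of $TW$ at once (one can verify the formula against the coordinate expression $\ovec(\bV)^{\top}(\bA\otimes\bG)\ovec(\bV)=\EE_a\EE_{\phi}[(\bm{\phi}^{\top}\bV\bar{\ba})^2]$), and it also makes transparent that the only difference from $\EE_a[\psi_a^*g]$ is decoupling the expectation over $a$ from that over $\phi$. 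For nondegeneracy, the paper argues by contradiction with an abstract witness $\lambda\otimes\mu$, where $\lambda$ is a null vector of $\EE[a\otimes a]$ (or $\mu$ of $\EE[g(z)]$) and the vanishing of $\EE_a[\psi_a^*g]$ on $\lambda\otimes\mu$ is read off from the tensor formula of Lemma~\ref{general KFAC metric lemma}; you instead pass to coordinates (legitimate, since degeneracy is a basis-independent property), invoke the kernel characterization of Kronecker products of PSD matrices, and exhibit concrete rank-one witnesses --- $v\,\bar{\ba}_0^{\top}$ when the activation factor degenerates, and the pure bias direction, whose pushforward under every $\psi_a$ is the constant $v_0$, when $\EE[g(z)]$ degenerates. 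The bias-direction witness is a nice touch that exploits the affine/homogeneous structure explicitly and is more self-evidently a nonzero tangent vector than the paper's contracted $\lambda\otimes\mu$. Both routes rely on the same measure-theoretic step (a nonnegative random variable with zero mean vanishes almost surely), which you use correctly in both cases.
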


\begin{proof}
For the first assertion, we need to check that both components $\EE[a\otimes a]$ and $\EE[g(z)]$ define symmetric positive-semidefinite 2-tensors. Recall that $a$ can be realized as a linear map from $TW$ to $TZ$. Then, the dual element $\lambda$ is a map from $T^*W$ to $T^*Z$. We defer to Appendix~\ref{Asubsec: tensor product of VS} for a formal explanation of this. To check the positive-semidefinite property, 
\[
\EE[a\otimes a](\lambda,\lambda) = \EE[(a\otimes a)(\lambda\otimes\lambda)] \geq 0,
\]
where the latter inequality is due to the fact that $a\otimes a$ is positive-semidefinite. Moreover, $a\otimes a$ is also symmetric and this property is preserved under expectations which implies that $\EE[a\otimes a]$ is both symmetric and positive-semidefinite. For the second term $\EE[g(z)]$ in $g_{\mathrm{ind}}(w)$, the fact that $g$ is a metric on $Z$ means that $g(z)$, by definition, is a symmetric positive-semidefinite 2-tensor on $T^*Z$.

To establish the second assertion of the theorem, we need to show that both $\EE[a\otimes a]$ and $\EE[g(z)]$ are positive-definite. Suppose to the contrary that this is not true for $\EE[a\otimes a]$. Then there exists $\lambda\in TW\otimes T^*Z$ (this is same as saying $\lambda$ is a linear map from $T^*W$ to $T^*Z$; refer to Appendix~\ref{Asubsec: tensor product of VS} for further explanations) such that
\begin{align*}
0 & = \EE[a \otimes a](\lambda,\lambda)  \\
& = \EE[(a\otimes a)(\lambda\otimes\lambda)],
\end{align*}
and hence $(a\otimes a)(\lambda\otimes\lambda)=0$. Now, consider the element $\lambda\otimes\mu\in (TW\otimes T^*Z)\otimes TZ$ where $\mu\in TZ$. We evaluate $\EE_a[\psi_a^*g(w)]$ at $\lambda\otimes\mu$:
\begin{align*}
\EE_a[\psi_a^*g(w)](\lambda\otimes\mu,\lambda\otimes\mu) & = \EE[a\otimes\phi\otimes a\otimes\phi](\lambda\otimes\mu\otimes\lambda\otimes\mu) \\
& = \EE[\underbrace{(a \otimes a)(\lambda\otimes\lambda)}_{=0}\cdot(\phi\otimes\phi)(\mu\otimes\mu)] \\
& = 0,
\end{align*}
where we use the result of Lemma~\ref{general KFAC metric lemma} in the first equality. This shows that $\EE_a[\psi_a^*g]$ is not positive-definite which yields a contradiction as $\EE_a[\psi_a^*g]$ was assumed to be a nondegenerate metric. The exact same argument can be applied to show that $\EE[g(z)]$ is positive-definite. This gives us the desired result.
\end{proof}

We finally show that the coordinate representation of $g_{\mathrm{ind}}$ matches the K-FAC approximation to the layerwise Fisher blocks.

\begin{Proposition} \label{general K-FAC metric coordinates prop}
Suppose that we choose coordinate systems for the affine spaces $A$, $Z$ and in these coordinates, 
\[
\llbracket a\rrbracket = \ba,\ \llbracket z\rrbracket = \bz,\ \llbracket g(z)\rrbracket = \bG(\bz).
\]
Then the independence metric $g_{\mathrm{ind}}$ can be expressed as
\[
\llbracket g_{\mathrm{ind}}(w)\rrbracket = \EE[\bar{\ba}\bar{\ba}^{\top}]\otimes\EE[\bG(\bz)].
\]
\end{Proposition}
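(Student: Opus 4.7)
The plan is to compute the coordinate representation of each tensor factor in $g_{\mathrm{ind}}(w)=\EE[a\otimes a]\otimes\EE[g(z)]$ separately, evaluate the resulting bilinear form against an arbitrary pair of tangent vectors $\partial_{w_{1}},\partial_{w_{2}}\in TW$, and then recognize the scalar that falls out as a Kronecker-product bilinear form using the standard $\ovec$ identities.

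First I fix the induced coordinates. A choice of affine bases on $A$ and $Z$ identifies an element $w\in W$ with a $d_{Z}\times(d_{A}+1)$ matrix $\bar{\bW}$ acting on homogenized inputs via $\bar{\bW}\bar{\ba}$, and correspondingly identifies a tangent vector $\partial_{w}\in TW$ with a perturbation matrix $\delta\bar{\bW}$ of the same shape (or, after vectorization, with $\ovec(\delta\bar{\bW})$). The pushforward calculation already carried out in Section~\ref{subsec:independence metric} shows that $\psi_{a*}$ is multiplication by $a$, so in these coordinates $\psi_{a*}:\delta\bar{\bW}\mapsto\delta\bar{\bW}\,\bar{\ba}$; this is the coordinate description of the linear map $TW\to TZ$ that we have been identifying with $a$.

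Second, I evaluate $g_{\mathrm{ind}}(w)$ on $\partial_{w_{1}},\partial_{w_{2}}$. By the definition of $a\otimes a$ given just before Eqn.~\ref{eq:abstract KFAC metric}, its value on this pair is $a(\partial_{w_{1}})\otimes a(\partial_{w_{2}})\in TZ\otimes TZ$, whose coordinate representation is the outer product $(\delta\bar{\bW}_{1}\bar{\ba})(\delta\bar{\bW}_{2}\bar{\ba})^{\top}=\delta\bar{\bW}_{1}\,\bar{\ba}\bar{\ba}^{\top}\,\delta\bar{\bW}_{2}^{\top}$. Taking expectation over $a$ pulls out $\EE[\bar{\ba}\bar{\ba}^{\top}]$ between the two perturbation matrices, and contracting the resulting $TZ\otimes TZ$-valued 2-tensor against $\EE[g(z)]$, represented by $\EE[\bG(\bz)]$, yields the scalar $\mathrm{tr}\bigl(\EE[\bG(\bz)]\,\delta\bar{\bW}_{2}\,\EE[\bar{\ba}\bar{\ba}^{\top}]\,\delta\bar{\bW}_{1}^{\top}\bigr)$ after a cyclic rearrangement of the trace. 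Finally I convert this trace into the desired form: the identities $\mathrm{tr}(\bX^{\top}\bY)=\ovec(\bX)^{\top}\ovec(\bY)$ and $\ovec(A X B)=(B^{\top}\otimes A)\ovec(X)$, combined with the symmetry of $\EE[\bar{\ba}\bar{\ba}^{\top}]$, rewrite the trace as $\ovec(\delta\bar{\bW}_{1})^{\top}\bigl(\EE[\bar{\ba}\bar{\ba}^{\top}]\otimes\EE[\bG(\bz)]\bigr)\ovec(\delta\bar{\bW}_{2})$, and reading off the matrix of this bilinear form in the induced basis of $W$ gives the claim.

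The main obstacle is purely bookkeeping: I need to commit to the column-major $\ovec$ convention implicit in the derivation of Eqn.~\ref{eq:KFAC decomposition blocks} (where $\ovec(\DD\bz_{i}\bar{\ba}_{i-1}^{\top})=\bar{\ba}_{i-1}\otimes\DD\bz_{i}$), track transposes carefully through the double contraction with $g(z)$, and invoke the symmetry of $\EE[\bar{\ba}\bar{\ba}^{\top}]$ at the right moment so that the Kronecker factors appear in the stated order. No new ideas beyond the pushforward description of $a$ already established in Section~\ref{subsec:independence metric} are needed.
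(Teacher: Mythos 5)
Your argument is correct, and it reaches the same conclusion by what is ultimately the same direct computation, but it is considerably more explicit than the paper's proof. The paper disposes of the proposition in three lines: it asserts that taking coordinates commutes with tensor products and expectations, and that $\llbracket a\otimes a\rrbracket=\bar{\ba}\bar{\ba}^{\top}$, and stops there. You instead evaluate the 2-tensor on an arbitrary pair of tangent vectors $\partial_{w_1},\partial_{w_2}$, represent them as perturbation matrices $\delta\bar{\bW}_1,\delta\bar{\bW}_2$, use the pushforward description $\psi_{a*}:\delta\bar{\bW}\mapsto\delta\bar{\bW}\bar{\ba}$, and then convert the resulting trace $\mathrm{tr}\bigl(\EE[\bG(\bz)]\,\delta\bar{\bW}_2\,\EE[\bar{\ba}\bar{\ba}^{\top}]\,\delta\bar{\bW}_1^{\top}\bigr)$ into a Kronecker-product quadratic form via $\mathrm{tr}(\bX^{\top}\bY)=\ovec(\bX)^{\top}\ovec(\bY)$ and $\ovec(\bA\bx\bB)=(\bB^{\top}\otimes\bA)\ovec(\bx)$. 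What this buys you is a genuine verification of the two facts the paper takes for granted — in particular the ordering of the Kronecker factors ($\EE[\bar{\ba}\bar{\ba}^{\top}]$ first, $\EE[\bG(\bz)]$ second) under the column-major $\ovec$ convention, which is really the only nontrivial content of the statement and is exactly where a sign- or ordering-slip would otherwise hide. The cost is length; the paper's proof is the appropriate level of terseness once those coordinate conventions have been internalized, but yours would serve better as a self-contained check.
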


\begin{proof}
This is by direct computation
\begin{align*}
\llbracket g_{\mathrm{KFAC}}(w)\rrbracket & = \llbracket \EE[\alpha\otimes\alpha]\otimes\EE[g(z)] \rrbracket \\
& = \EE[\llbracket\alpha\otimes\alpha\rrbracket] \otimes\EE[\llbracket g(z)\rrbracket] \\
& = \EE[\bar{\ba}\bar{\ba}^{\top}]\otimes\EE[\bG(\bz)],
\end{align*}
where we use the homogeneous notation $\bar{\ba}$ in the last equality.
\end{proof}

\begin{Remark}
In the context of MLPs (which we explain in much greater detail subsequently) where $A=\AA_{i-1}$, $Z=\ZZ_i$ and $W=\WW_i$, the matrix $\bG(z)=\EE_{\by}[\DD\bz_i\DD\bz_i^{\top}]$ where the expectation is taken over output space. $\EE[\bG(z)]$ means we furthermore take the expectation over $\ZZ_i$. When we write $\EE[\DD\bz_i\DD\bz_i^{\top}]$ in Eqn.~\ref{eq:KFAC decomposition blocks}, we implicitly take this to mean $\EE[\bG(z)]$. 
\end{Remark}

\subsection{K-FAC metric}~\mbox{} \label{subsec:KFAC metric}

In this section, we formulate the layerwise independence approximation in a coordinate-free way, allowing us to define the K-FAC metric, whose coordinate representation matches the K-FAC approximation to the Fisher matrix. 
We begin by introducing the notion of additive metrics on product manifolds. Next, we proceed to use the independence metric developed in Section~\ref{subsec:independence metric} to define the K-FAC metric for MLPs. Lastly, by viewing K-FAC as a metric on $\WW$, we show how invariances of the K-FAC algorithm can be obtained in a very natural and straightforward manner.

\subsubsection{Additive metrics}~\mbox{}

Given metrics $g_{\MM_1}$ and $g_{\MM_2}$ on $\MM_1$ and $\MM_2$ respectively, we describe how to naturally define a metric on the product manifold $\MM_1\times\MM_2$. For any point $(p,q)\in\MM_1\times\MM_2$, there is a canonical isomorphism of tangent spaces:
\[
T_{(p,q)}(\MM_1\times\MM_2)\cong T_p\MM_1\oplus T_q\MM_2.
\]
The proof of this fact can be found in standard differential geometry literature~\cite{lee2003smooth} and so we do not elaborate further here. Hence, any vector $v\in T_{(p,q)}(\MM_1\times\MM_2)$ can be written as a pair $(v_1,v_2)$ where $v_1\in T_p\MM_1$ and $v_2\in T_q\MM_2$. Now, we define the additive metric $g_{\MM_1}+g_{\MM_2}$ on $\MM_1\times\MM_2$ as follows:
\begin{equation} \label{eq:product metric on tangent vectors}
(g_{\MM_1}+g_{\MM_2})(p,q)(u,v) = g_{\MM_1}(p)(u_1,v_1) + g_{\MM_2}(q)(u_2,v_2). 
\end{equation}
If we choose a coordinate system around $(p,q)$ with the metrics $g_{\MM_1}$, $g_{\MM_2}$ represented by matrices $\mathbf{G}_{\MM_1}$, $\mathbf{G}_{\MM_2}$ respectively, then we have
\[
\llbracket (g_{\MM_1} + g_{\MM_2})(p,q)\rrbracket = \left[\begin{array}{cc}
\mathbf{G}_{\MM_1} & \mathbf{0} \\
\mathbf{0} & \mathbf{G}_{\MM_2} 
\end{array}\right],
\]
which is a matrix with block diagonals $\mathbf{G}_{\MM_1},\mathbf{G}_{\MM_2}$ and zero everywhere else. This construction generalizes easily to sums of more than two terms.

\subsubsection{Coordinate-free K-FAC for MLPs}~\mbox{}

Consider a MLP with $L$ layers as described earlier in Section~\ref{subsec:coordinate-free network setup}. For every $i\in\{1,\dots,L\}$, we define the following maps 
\begin{itemize}
	\item $\psi_{\xi}^i:\WW_i\to\ZZ_i$ which sends layerwise parameters $\omega_i$ to pre-activations $\zeta_i$ by evaluation at activations $\alpha_{i-1}$. 
	\item $\varphi_{\xi}^i:\ZZ_i\to\YY$ which sends $\zeta_i$ to network outputs $\alpha_L=f(\xi,\omega)$.
\end{itemize}
Note that $\psi_{\xi}^i$ is a smooth map by definition. Now, observe that $\varphi_{\xi}^i$ is exactly the composition of network maps
\[
\rho_L\circ\omega_L\circ\dots\circ\omega_{i+1}\circ\rho_i:\ZZ_i\to\YY.
\]
Since all activation functions $\rho_i$ are assumed to be smooth maps, it follows immediately that $\varphi_{\xi}^i$ is also a smooth map.  Moreover, consider the map $\Psi_{\xi}^i:\WW_i\to\YY$ which is defined as the composition $\Psi_{\xi}^i = \varphi_{\xi}^i\circ\psi_{\xi}^i$. The subscript $\xi$ is used to highlight the fact that all of these maps implicitly depend on network inputs $\xi$.

Let $g$ be a metric on $\YY$. Then, the pullback $(\varphi_{\xi}^i)^*g$ defines a metric on $\ZZ_i$. Now, if we take $A$, $Z$, $W$ in Section~\ref{subsec:independence metric} to be
\[
A=\AA_{i-1},\ Z=\ZZ_i,\ W=\WW_i,
\]
and the metric on $Z=\ZZ_i$ to be $(\varphi_{\xi}^i)^*g$, the independence metric on $\WW_i$ here is 
\begin{equation} \label{eq:abstract KFAC metric MLP components}
g_{\mathrm{ind}}^i(\omega_i) = \EE[\alpha_{i-1}\otimes\alpha_{i-1}]\otimes\EE[(\varphi_{\xi}^i)^*g(\zeta_i)].
\end{equation}

\begin{Definition} \label{abstract K-FAC metric MLP definition}
The K-FAC metric on the weight space $\WW$ of a MLP is defined as
\[
g_{\mathrm{KFAC}}(\omega) = g_{\mathrm{ind}}^1(\omega_1) + \dots + g_{\mathrm{ind}}^L(\omega_L),
\]
where the sum above is as defined in Eqn.~\ref{eq:product metric on tangent vectors} and each $g_{\mathrm{ind}}^i$ is as given in Eqn.~\ref{eq:abstract KFAC metric MLP components}.
\end{Definition}

\begin{Theorem} \label{abstract KFAC theorem for MLP}
Let $g$ be a metric on $\YY$. Then, $g_{\mathrm{KFAC}}$ given in Definition~\ref{abstract K-FAC metric MLP definition} is indeed a metric on the weight space $\WW$ of an abstract MLP. Moreover, if we assume that the expected pullback of $g$,
\[
\EE_{\xi}[(\Psi_{\xi}^i)^*g], 
\]
under the map $\Psi_{\xi}^i:\WW_i\to\YY$ is a nondegenerate metric on the layerwise weight space $\WW_i$ for every $i$, then $g_{\mathrm{KFAC}}$ is also nondegenerate. 
\end{Theorem}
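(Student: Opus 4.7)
The plan is to reduce both assertions to layerwise applications of Theorem~\ref{general KFAC theorem}, and then assemble the results through the additive-metric construction introduced in the previous subsection.

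For the first assertion, I would start by noting that the map $\varphi_{\xi}^i:\ZZ_i\to\YY$ is smooth (it is the composition of the fixed smooth activation functions with the affine maps defined by the subsequent layer parameters), and hence the pullback $(\varphi_{\xi}^i)^{*}g$ is a bona fide metric on $\ZZ_i$. I would then apply Theorem~\ref{general KFAC theorem} with the instantiation $A=\AA_{i-1}$, $Z=\ZZ_i$, $W=\WW_i$, and the metric on $Z$ taken to be $(\varphi_{\xi}^i)^{*}g$. This identifies $g_{\mathrm{ind}}^{i}$ in Eqn.~\ref{eq:abstract KFAC metric MLP components} with the independence metric of that theorem, so $g_{\mathrm{ind}}^{i}$ is a metric on $\WW_i$. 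The additive-metric construction on $\WW=\WW_1\times\cdots\times\WW_L$ then yields that $g_{\mathrm{KFAC}}=g_{\mathrm{ind}}^{1}+\cdots+g_{\mathrm{ind}}^{L}$ is a metric on $\WW$.

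For the nondegeneracy assertion, the crucial observation is functoriality of the pullback. Since $\Psi_{\xi}^{i}=\varphi_{\xi}^{i}\circ\psi_{\xi}^{i}$, one has $(\Psi_{\xi}^{i})^{*}g=(\psi_{\xi}^{i})^{*}(\varphi_{\xi}^{i})^{*}g$, and so the expected pullback metric appearing in the hypothesis of Theorem~\ref{general KFAC theorem}, namely $\EE_{\alpha_{i-1}}[(\psi_{\xi}^{i})^{*}(\varphi_{\xi}^{i})^{*}g]$, matches the assumed nondegenerate metric $\EE_{\xi}[(\Psi_{\xi}^{i})^{*}g]$ once the distribution on $\AA_{i-1}$ is identified with the pushforward of the input distribution through layers $1,\dots,i-1$. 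The nondegeneracy half of Theorem~\ref{general KFAC theorem} then gives that each $g_{\mathrm{ind}}^{i}$ is nondegenerate on $\WW_i$. A standard block-diagonal positivity argument finishes: given any nonzero tangent vector $v=(v_1,\dots,v_L)$ on $\WW$, at least one $v_i\ne 0$, so
\[
g_{\mathrm{KFAC}}(\omega)(v,v)=\sum_{j=1}^{L}g_{\mathrm{ind}}^{j}(\omega_j)(v_j,v_j)\ge g_{\mathrm{ind}}^{i}(\omega_i)(v_i,v_i)>0.
\]

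The main subtlety I expect to encounter is the bookkeeping around the various expectations: the single outer expectation in $g_{\mathrm{ind}}^{i}$ absorbs both the distribution on $\AA_{i-1}$ appearing in $\EE[\alpha_{i-1}\otimes\alpha_{i-1}]$ and the $\xi$-dependence of the metric $(\varphi_{\xi}^{i})^{*}g$ on $\ZZ_i$. Once the pullback identity above is in hand, however, this reduces to the observation that the two descriptions of the expected pullback metric coincide under the induced distribution on $\AA_{i-1}$, and the remainder of the argument is routine.
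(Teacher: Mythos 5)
Your proposal is correct and follows essentially the same route as the paper's proof: apply Theorem~\ref{general KFAC theorem} layerwise with $A=\AA_{i-1}$, $Z=\ZZ_i$, $W=\WW_i$, use functoriality of the pullback $(\Psi_{\xi}^i)^* = (\psi_{\xi}^i)^*\circ(\varphi_{\xi}^i)^*$ to transfer the nondegeneracy hypothesis, and assemble via the additive metric. Your explicit block-diagonal positivity argument at the end and your remark about identifying the distribution on $\AA_{i-1}$ with the pushforward of the input distribution are welcome elaborations of steps the paper leaves implicit, but they do not change the argument.
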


\begin{proof}
From Theorem~\ref{general KFAC theorem}, we know that $g_{\mathrm{ind}}^i$ is a metric on $\WW_i$. Since $g_{\mathrm{KFAC}}$ is defined as the additive metric where each of the summands are $g_{\mathrm{ind}}^i$, we can conclude that  $g_{\mathrm{KFAC}}$ is a metric. For the second assertion of the theorem, recall that $\Psi_{\xi}^i = \varphi_{\xi}^i\circ\psi_{\xi}^i$. By the functoriality property of pullback operations, we have
\[
(\Psi_{\xi}^i)^* = (\psi_{\xi}^i)^*\circ(\varphi_{\xi}^i)^*.
\]
Since $\EE_{\xi}[(\Psi_{\xi}^i)^*g]$ was assumed to be nondegenerate, this implies that
\[
\EE_{\xi}[(\psi_{\xi}^i)^*((\varphi_{\xi}^i)^*g)],
\]
is also nondegenerate. Then, from the second assertion of Theorem~\ref{general KFAC theorem}, we obtain that $g_{\mathrm{ind}}^i$ is a nondegenerate metric on $\WW_i$. Consequently, $g_{\mathrm{KFAC}}$ is nondegenerate which concludes the proof.
\end{proof}

\begin{Remark}
We like to remark here that we can fit the K-FAC metric for a metric $g$ on the output space $\YY$ which is not the output Fisher metric $g_{F,\mathrm{out}}$ in Example~\ref{Fisher metric example}. To do so, we sample a covector $\phi$ on $\YY$ whose expected tensor product is $\EE[\phi\otimes\phi]=g$. Then, taking expectation of the tensor product of pullback samples is the pullback of $g$.
\end{Remark}

\noindent {\bf {Coordinate-free proof of Theorem~\ref{KFAC invariances theorem}}.} We can now provide a natural and straightforward proof of Theorem~\ref{KFAC invariances theorem}. We have already shown in Section~\ref{subsec:coordinate-free network setup} that the networks $\NN$ and $\NN^{\dagger}$ correspond to two different choices of parameterizations for the same underlying abstract MLP. Hence, they must compute the same function. 

Assume that the metric $g$ on the output space $\YY$ in Theorem~\ref{abstract KFAC theorem for MLP} is the output Fisher metric $g_{F,\mathrm{out}}$ in Example~\ref{Fisher metric example}. The pullback of this under $\varphi_{\xi}^i$ is given by
\[
(\varphi_{\xi}^i)^*g_{F,\mathrm{out}}(\zeta_i) = \EE[d\LL_{\zeta_i}\otimes d\LL_{\zeta_i}].
\]
Let us choose coordinate systems on $\AA_{i-1}$ and $\ZZ_i$ with
\[
\llbracket \alpha_{i-1} \rrbracket =\ba_{i-1},\ \llbracket \zeta_i\rrbracket = \bz_i,\ \llbracket(\varphi_{\xi}^i)^*g_{F,\mathrm{out}}(\zeta_i)\rrbracket = \llbracket \EE[d\LL_{\zeta_i}\otimes d\LL_{\zeta_i}] \rrbracket = \EE[\DD\bz_i\DD\bz_i^{\top}].
\]
Then, by Proposition~\ref{general K-FAC metric coordinates prop},
\[
\llbracket g_{\mathrm{ind}}^i(\omega_i)\rrbracket = \EE[\bar{\ba}_{i-1}\bar{\ba}_{i-1}^{\top}]\otimes\EE[\DD\bz_i\DD\bz_i^{\top}],
\]
which is exactly $\hat{\bF}(\bw)_{i,i}$ given earlier in Eqn.~\ref{eq:KFAC decomposition blocks}. Furthermore, 
\[
\llbracket g_{\mathrm{KFAC}}(\omega)\rrbracket = \llbracket g_{\mathrm{ind}}^1(\omega_1) + \dots + g_{\mathrm{ind}}^L(\omega_L)\rrbracket
\]
is the matrix with diagonal blocks $\hat{\bF}(\bw)_{i,i}$ and zeros everywhere else. This is precisely $\hat{\bF}(\bw)$ in Eqn.~\ref{eq:KFAC matrix}. Now, observe that
\[
\llbracket g_{\mathrm{KFAC}}(\omega)^{-1}dh(\omega)\rrbracket = \llbracket g_{\mathrm{KFAC}}(\omega)^{-1}\rrbracket\llbracket dh(\omega)\rrbracket = \hat{\bF}(\bw)^{-1}\nabla h(\bw),
\]
and hence the K-FAC update rule in Eqn.~\ref{eq:KFAC update rule} is simply a natural gradient update rule with respect to the K-FAC metric $g_{\mathrm{KFAC}}$ for abstract MLPs. Suppose that $g_{\mathrm{KFAC}}$ is a nondegenerate metric; which is true for example if the assumptions in the second assertion of Theorem~\ref{abstract KFAC theorem for MLP} hold. Applying Theorem~\ref{NG invariances theorem} shows that this update rule is invariant to any affine reparameterizations of the model. \qed

\begin{Remark}
Note that in the proof above we made an assumption that the K-FAC metric $g_{\mathrm{KFAC}}$ is nondegenerate. In order to handle such degeneracies occurring in practical situations, one often adds a damping term $\gamma\mathbf{I}$ to the K-FAC approximation $\hat{\bF}(\bw)$. The invariance properties of the K-FAC update rules are no longer preserved if we add this damping term; however, if the effect of it is small, then the update is approximately invariant. We refer to~\cite{KFAC15} for more extensive details on effective damping techniques. 
\end{Remark}

\section{coordinate-free k-fac for convolutional networks} \label{sec:abstract-CNN}

In the remaining two sections, we extend the preceding analysis to convolutional networks and recurrent neural networks. Both cases are straightforward applications of the results from Section \ref{sec:abstract-mlp}, highlighting the flexibility of our analysis.

\subsection{Convolutional networks}~\mbox{}

We begin by describing the convolution layer of a convolutional network in mathematical terms following~\cite{KFC}. It suffices to only consider convolution layers as the pooling and response normalization layers of a convolutional network typically do not contain (many) trainable weights. We then introduce the notion of a transformed convolution layer analogous to what was done in the case of MLPs. Lastly, we use the abstract linear algebra machinery developed in Section~\ref{sec:background} to give a coordinate-free description of convolution layers.

\subsubsection{Convolution layers}~\mbox{} \label{subsubsec:convolution layers}

We focus on a single convolution layer. A convolution layer $l$ takes as input a layer of activations $\ba_{j,t}$, where $j\in\{1,\dots,J\}$ indexes the input map and $t\in\TT$ indexes the spatial location. $\TT$ here denotes the set of spatial locations, which we typically take to be a 2D-grid. We assume that the convolution is performed with a stride of 1 and padding equal to the kernel radius $R$, so that the set of spatial locations is shared between the input and output feature maps. This layer is parameterized by a set of weights $\bw_{i,j,\delta}$ and biases $\bb_i$, where $i\in\{1,\dots,I\}$ indexes the output map and $\delta\in\Delta$ indexes the spatial offset. The numbers of spatial locations and spatial offsets are denoted by $|\TT|$ and $|\Delta|$ respectively. The computation at the convolution layer is given by
\begin{equation} \label{eq:convolution layer computation}
\bz_{i,t} = \sum_{\delta\in\Delta}\bw_{i,j,\delta}\ba_{j,t+\delta} + \bb_i.
\end{equation}
The pre-activations $\bz_{i,t}$ are then passed through a nonlinear activation function $\phi_l$. Analogous to feed-forward networks, the weight derivatives are computed using backpropagation:
\[
\DD\bw_{i,j,\delta} = \sum_{t\in\TT}\ba_{j,t+\delta}\DD\bz_{i,t}. 
\]

Following \cite{KFC}, we represent the convolution layer computation in Eqn.~\ref{eq:convolution layer computation} using matrix notation. To do this, we write the activations $\ba_{j,t}$ as a $J\times|\TT|$ matrix $\bA_{l-1}$, the pre-activations $\bz_{i,t}$ as a $I\times|\TT|$ matrix $\bZ_l$, the weights $\bw_{i,j,\delta}$ as a $I\times J|\Delta|$ matrix $\bW_l$ and the bias vector as $\bb_l$. For the activation matrix $\bA_{l-1}$, if we extract the patches surrounding each spatial location $t\in\TT$ and flatten these patches into vectors where the vectors become columns of a matrix, we obtain a $J|\Delta|\times|\TT|$ matrix which we denote by $\bA_{l-1}^{\exp}$. From now on, we refer to this matrix as the expanded activations. Finally, we can use these matrix notations to rewrite the computation in Eqn.~\ref{eq:convolution layer computation} as
\begin{equation} \label{eq:convolution layer matrix multiplication}
\begin{aligned}
\bZ_l & = \bW_l\bA_{l-1}^{\exp} + \bb_l \\
\bA_l & = \phi_l(\bZ_l).
\end{aligned}
\end{equation}
For convenience purposes later, we adopt homogeneous coordinates for various matrices:
\[
[\bA_{l-1}^{\exp}]_H = \left[\begin{array}{c}
\bA_{l-1}^{\exp} \\
\mathbf{1}
\end{array}\right],
[\bZ_{l-1}]_H = \left[\begin{array}{c}
\bZ_l \\
\mathbf{1}
\end{array}\right],
[\bW_l]_H = \left[\begin{array}{cc}
\bW_l & \bb_l \\
\mathbf{0} & 1
\end{array}\right],
[\bA_l]_H = \left[\begin{array}{c}
\bA_l \\
\mathbf{1}
\end{array}\right].
\]
Hence, Eqn.~\ref{eq:convolution layer matrix multiplication} can be rewritten as
\begin{equation} \label{eq:homogeneous convolution layer computation}
\begin{aligned}
[\bZ_l]_H & = [\bW_l]_H[\bA_{l-1}^{\exp}]_H \\
[\bA_l]_H & = \phi_l([\bZ_l]_H), 
\end{aligned}
\end{equation}
where the activation function $\phi_l$ here ignores the homogeneous coordinate. 

We briefly introduce the concept of a transformed convolution layer. For a convolution layer as defined in Eqn.~\ref{eq:homogeneous convolution layer computation}, the parameters $[\bW_l]_H$ and the transformed parameters $[\bW_l^{\dagger}]_H$ are related in the following way
\begin{equation} \label{eq:transformed convolution layer parameters}
[\bW_l]_H = \bm{\Gamma}_l[\bW_l^{\dagger}]_H(\bI\otimes\bm{\Upsilon}_{l-1}),
\end{equation}
where $\bm{\Gamma}_l$ and $\bm{\Upsilon}_l$ are invertible matrices. The activation functions $\phi_l$ and $\phi_l^{\dagger}$ are related through a standard affine change-of-basis as given in Eqn.~\ref{eq:transformed network computation}.

\subsubsection{Abstract convolution layers}~\mbox{} \label{subsubsec:abstract conv layer}

Just as in the coordinate-dependent case earlier, we focus on a single layer. An abstract convolution layer $l$ is defined as follows:
\begin{itemize}
	\item Local activations at each spatial location $t\in\TT$ are taken to be elements $\alpha_{l-1}$ in an affine space $\AA_{l-1}$.
	\item Activations are taken to be elements  $\alpha_{l-1}^{(:)}$ in $\AA_{l-1}^{|\TT|}$, (i.e.~the direct product of $\AA$, $|\TT|$ times). (The superscripts are meant to be suggestive of Python slicing notation.)
	\item Expanded activations at $t\in\TT$ are taken to be elements $\alpha_{l-1}^{(:,t)}$ in $\AA_{l-1}^{|\Delta|}$. The full expanded activations are taken to be elements $\alpha_{l-1}^{(:,:)}$ in $\AA_{l-1}^{|\Delta|\otimes|\TT|}$.
	\item Local pre-activations at $t\in\TT$ are taken to be elements $\zeta_l^{(t)}$ in an affine space $\ZZ_l$.
	\item Pre-activations are taken to be elements $\zeta_l^{(:)}$ in $\ZZ_l^{|\TT|}$.
	\item Layerwise parameters are affine transformations $\omega_l$ between $\AA_{l-1}^{|\Delta|}$ and $\ZZ_l$. The collection of these transformations is an affine space in its own right which we denote by $\WW_l$ and refer to as the layerwise weight space. If we apply $\omega_l$ pointwise, this can be extended to a map 
    \[
    \AA_{l-1}^{|\Delta|\otimes|\TT|} \to \ZZ_l^{|\TT|}.
    \]
\end{itemize}
The computation for this abstract layer is
\begin{align*}
\zeta_l^{(t)} & = \omega_l(\alpha_{l-1}^{(:,t)}) \\
\alpha_l & = \rho_l(\zeta_l^{(t)}),
\end{align*}
where $\rho_l$ is a fixed nonlinear activation function and $\alpha_l$ are the $l$-th layer local activations defined in exactly the same manner as $\alpha_{l-1}$. 

We choose affine bases on $\AA_{l-1}$, $\ZZ_l$, and $\AA_l$. A basis on $\AA_{l-1}$ naturally induces a basis for $\AA_{l-1}^{|\Delta|}$. Consequently, this gives a basis also for the layerwise parameter space $\WW_l$. Let $\iota$, $\kappa$ be two such choices. With respect to $\iota$, we write
\[
\llbracket \alpha_{l-1}^{(:,t)}\rrbracket_{\iota} = \ba_{l-1}^{(:,t)},\  \llbracket\zeta_l^{(t)}\rrbracket_{\iota} = \bz_l^{(t)},\ \llbracket\omega_l\rrbracket_{\iota} = (\bW_l\ \bb_l),\ \llbracket\rho_l\rrbracket_{\iota} = \phi_l,\ \llbracket\alpha_l\rrbracket_{\iota} = \ba_l,
\]
and with respect to $\kappa$, we write
\[
\llbracket \alpha_{l-1}^{(:,t)}\rrbracket_{\kappa} = (\ba_{l-1}^{(:,t)})^{\ddagger},\  \llbracket\zeta_l^{(t)}\rrbracket_{\kappa} = (\bz_l^{(t)})^{\ddagger},\ \llbracket\omega_l\rrbracket_{\kappa} = (\bW_l^{\ddagger}\ \bb_l^{\ddagger}),\ \llbracket\rho_l\rrbracket_{\kappa} = \phi_l^{\ddagger},\ \llbracket\alpha_l\rrbracket_{\kappa} = \ba_l^{\ddagger}.
\]
Note that $\ba_{l-1}^{(:,t)}$ are $J|\Delta|$-dimensional column vectors of the expanded activations matrix $\bA_{l-1}^{\exp}$, $\bz_l^{(t)}$ are $I$-dimensional column vectors of pre-activations matrix $\bZ_l$ and $\ba_l$ are $J$-dimensional column vectors of activations matrix $\bA_l$.

Now, suppose that $(\bOmega_{l-1}\ \bgamma_{l-1})$ is the change-of-basis from $\iota$ to $\kappa$ on $\AA_{l-1}$ and $(\bPhi_l\ \btau_l)$ is the change-of-basis from $\kappa$ to $\iota$ on $\ZZ_l$. If we denote  
\[
[\bOmega_{l-1}]_H = \left[\begin{array}{cc}
\bOmega_{l-1} & \bgamma_{l-1} \\
\mathbf{0} & 1
\end{array}\right],\ 
[\bPhi_l]_H = \left[\begin{array}{cc}
\bPhi_l & \btau_l \\
\mathbf{0} & 1
\end{array}\right],
\]
then by the affine change-of-basis formula for direct products (Eqn.~\ref{eq:change of affine basis product space}), $\mathbf{I}\otimes[\bOmega_{l-1}]_H$ defines the change-of-basis from $\iota$ to $\kappa$ on $\AA_{l-1}^{|\Delta|}$. The parameters $[\bW_l]_H$ and $[\bW_l^{\ddagger}]_H$ are related as follows:
\[
[\bW_l]_H = [\bPhi_l]_H [\bW_l^{\ddagger}]_H(\mathbf{I}\otimes[\bOmega_{l-1}]_H)
\]
By taking $\bm{\Upsilon}_{l-1}$ and $\bm{\Gamma}_l$ in Eqn.~\ref{eq:transformed convolution layer parameters} to be $\bm{\Upsilon}_{l-1}=[\bOmega_{l-1}]_H$ and $\bm{\Gamma}_l=[\bPhi_l]_H$, we can conclude that a convolution layer and its transformed version simply correspond to two different choices of parameterizations for the same underlying abstract convolution layer.

\subsection{Kronecker Factors for Convolution}~\mbox{}

We review the Kronecker Factors for Convolution method~\cite{KFC} which is a version of K-FAC for convolutional networks. The network architecture to consider is a convolutional network with $L$ convolution layers. First, let $\bw$ be the concatenation of all trainable parameters $\bar{\bW}_l$,
\[
\bw = [\ovec(\bar{\bW}_1)^{\top}\ \ovec(\bar{\bW}_2)^{\top}\ \dots\ \ovec(\bar{\bW}_L)^{\top}]^{\top}.
\]
For an input-target pair $(\bx,\by)$, the Fisher matrix for this network is
\[
\bF(\bw) = \EE_{\bx,\by}[(\DD\bw)(\DD\bw)^{\top}],
\]
where $\DD\bw$ is the log-likelihood gradient and the expectation is taken over the model's predictive distribution $P_{\by|\bx}(\bw)$ for $\by$ and over the data distribution for $\bx$. The diagonal blocks $\bF(\bw)_{l,l}$ of $\bF(\bw)$ are
\[
\bF(\bw)_{l,l} = \EE[\ovec(\DD\bar{\bW}_l)\ovec(\DD\bar{\bW}_l)^{\top}].
\]

We are ready now to present the K-FAC approximation for convolutional networks. For a particular layer $l$, we define the K-FAC approximation $\hat{\bF}(\bw)_{l,l}$ to $\bF(\bw)_{l,l}$ as:
\begin{equation} \label{eq:KFC approximation}
\hat{\bF}(\bw)_{l,l} = |\TT|(\EE_{\TT}[\bar{\ba}_{l-1}^{(:,t)}(\bar{\ba}_{l-1}^{(:,t)})^{\top}]\otimes\EE_{\TT}[\DD\bz_l^{(t)}(\DD\bz_l^{(t)})^{\top}]),
\end{equation}
where $\bar{\ba}_{l-1}^{(:,t)}$ is the homogeneous notation for $\ba_{l-1}^{(:,t)}$. The K-FAC approximation $\hat{\bF}(\bw)$ to $\bF(\bw)$ is the matrix with diagonal blocks $\hat{\bF}(\bw)_{l,l}$ as given above and zeros everywhere else. 

Finally, for an objective function $h(\bw)$ defined over the weights, K-FAC optimizes $h(\bw)$ through the following update rule,
\begin{equation} \label{eq:KFC update rule}
\bw\leftarrow\bw-\epsilon\hat{\bF}(\bw)^{-1}\nabla h(\bw).
\end{equation}

\begin{Remark}
Unlike MLPs where K-FAC is derived from assuming only the statistical independence of activations and pre-activation derivatives, convolution layers admit weight sharing and additional assumptions are necessary to derive the approximation $\hat{\bF}(\bw)_{l,l}$ in Eqn.~\ref{eq:KFC approximation}. We refer to~\cite{KFC} for extensive details on how these approximations are derived. 
\end{Remark}
 
Since the purpose of our paper is to derive invariance properties of K-FAC through coordinate-free constructions, we refer the reader to~\cite{KFC} for other aspects of the K-FAC algorithm on convolutional networks, such as implementation details and experimental results. To end our discussion of K-FAC on convolutional networks in the coordinate-dependent case, we present the following theorem which shows that K-FAC is invariant under change-of-basis transformations given in Section~\ref{subsubsec:abstract conv layer}.

\begin{Theorem}[Theorem 3 in~\cite{KFC}]\label{KFC invariances theorem}
Let $\NN$ be a convolutional network with parameter vector $\bw$ and activation functions $\{\phi_l\}^L_{l=1}$. Suppose that we have activation functions $\{\phi_l^\dagger\}^L_{l=1}$ which are related to $\{\phi_l\}^L_{l=1}$ by standard change-of-basis transformations. Then, there exists a parameter vector $\bw^{\dagger}$ such that the transformed network $\NN^{\dagger}$ with parameter vector $\bw^{\dagger}$ and activation functions $\{\phi_l\}^L_{l=1}$ computes the same function as $\NN$. Furthermore, the K-FAC updates are equivalent, in the sense that the resulting networks compute the same function.
\end{Theorem}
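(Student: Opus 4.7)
The plan is to mirror the coordinate-free proof of Theorem~\ref{KFAC invariances theorem} given for MLPs. There are two assertions to establish: first, the existence of $\bw^{\dagger}$ making $\NN$ and $\NN^{\dagger}$ compute the same function, and second, that the K-FAC updates for the two networks remain equivalent. The first assertion is essentially already in hand from the end of Section~\ref{subsubsec:abstract conv layer}: the convolution layer in Eqn.~\ref{eq:homogeneous convolution layer computation} and its transformed version via Eqn.~\ref{eq:transformed convolution layer parameters} correspond to two choices of affine bases (with $\bm{\Upsilon}_{l-1} = [\bOmega_{l-1}]_H$ and $\bm{\Gamma}_l = [\bPhi_l]_H$) for the same underlying abstract convolution layer, and reading off the change-of-basis formula produces the required $\bw^{\dagger}$.

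For the second assertion, I would construct a K-FAC metric on the abstract weight space $\WW = \WW_1 \times \dots \times \WW_L$ and then invoke Theorem~\ref{NG invariances theorem}. Fix an output metric $g$ on $\YY$. For each layer $l$ and each spatial location $t \in \TT$, introduce the evaluation map $\psi_{\xi,t}^l : \WW_l \to \ZZ_l$ sending $\omega_l$ to $\zeta_l^{(t)} = \omega_l(\alpha_{l-1}^{(:,t)})$, together with the forward map $\varphi_{\xi,t}^l : \ZZ_l \to \YY$ obtained by composing the remaining activation functions and layerwise affine maps. Applying Theorem~\ref{general KFAC theorem} with $A = \AA_{l-1}^{|\Delta|}$, $Z = \ZZ_l$, $W = \WW_l$, and metric on $Z$ given by the expected pullback $\EE_{t,\xi}[(\varphi_{\xi,t}^l)^* g]$, one obtains a layerwise independence metric
\[
g_{\mathrm{ind}}^l(\omega_l) = \EE_{t,\xi}\bigl[\alpha_{l-1}^{(:,t)} \otimes \alpha_{l-1}^{(:,t)}\bigr] \otimes \EE_{t,\xi}\bigl[(\varphi_{\xi,t}^l)^* g(\zeta_l^{(t)})\bigr],
\]
and the coordinate-free K-FAC metric is defined as the additive metric $g_{\mathrm{KFAC}} = g_{\mathrm{ind}}^1 + \dots + g_{\mathrm{ind}}^L$ on $\WW$.

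The next step is to verify that $\llbracket g_{\mathrm{KFAC}}(\omega) \rrbracket$ matches $\hat{\bF}(\bw)$ from Eqn.~\ref{eq:KFC approximation}. By a direct application of Proposition~\ref{general K-FAC metric coordinates prop} block by block, the coordinate representation of $g_{\mathrm{ind}}^l$ in any affine parameterization equals $\EE_{t,\bx}[\bar{\ba}_{l-1}^{(:,t)} (\bar{\ba}_{l-1}^{(:,t)})^{\top}] \otimes \EE_{t,\bx,\by}[\DD\bz_l^{(t)} (\DD\bz_l^{(t)})^{\top}]$, agreeing with $\hat{\bF}(\bw)_{l,l}$ up to the scalar $|\TT|$, which may be absorbed into the learning rate without affecting the invariance argument. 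Assuming nondegeneracy (mirroring the condition in Theorem~\ref{abstract KFAC theorem for MLP}), the K-FAC update in Eqn.~\ref{eq:KFC update rule} is then the intrinsic natural-gradient update for $g_{\mathrm{KFAC}}$, so Theorem~\ref{NG invariances theorem} delivers exact invariance under affine reparameterizations, which is precisely the class of transformations in Eqn.~\ref{eq:transformed convolution layer parameters}.

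The step I expect to be the main obstacle is handling spatial weight sharing cleanly: a single $\omega_l \in \WW_l$ acts at every location $t \in \TT$, so the evaluation map naturally splits into a family $\{\psi_{\xi,t}^l\}_{t \in \TT}$, and the independence approximation has to be applied only after pooling over $t$. This is exactly where the extra assumptions beyond pre-activation/activation independence enter, as flagged in the remark following Eqn.~\ref{eq:KFC update rule} and detailed in~\cite{KFC}; at the coordinate-free level they amount to treating the patch-level activations and pre-activation covectors as spatially pooled random variables before forming $g_{\mathrm{ind}}^l$. Reconciling this pooled independence metric with Eqn.~\ref{eq:KFC approximation} up to the $|\TT|$ scaling is the bookkeeping that would require the most care.
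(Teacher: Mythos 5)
Your proposal is correct and follows essentially the same route as the paper: identify $\NN$ and $\NN^{\dagger}$ as two parameterizations of one abstract convolutional network, build the layerwise independence metric from the per-location pullbacks $(\varphi_{\xi,t}^l)^*g$, match coordinates against Eqn.~\ref{eq:KFC approximation} via Proposition~\ref{general K-FAC metric coordinates prop}, and invoke Theorem~\ref{NG invariances theorem}. The only cosmetic difference is that the paper bakes the factor $|\TT|$ into the definition of $g_{\mathrm{ind}}^l$ (Eqn.~\ref{eq:KFC metric components}) so that $\llbracket g_{\mathrm{KFAC}}(\omega)\rrbracket$ equals $\hat{\bF}(\bw)$ exactly, rather than absorbing it into the learning rate as you suggest.
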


The proof of this theorem in~\cite{KFC} again depends on a choice of coordinates. In the next part of our paper, we instead take an intrinsic approach and prove this theorem as a straightforward application of the results given in Section~\ref{sec:abstract-mlp}.

\subsection{Coordinate-free K-FAC for convolutional networks}~\mbox{} \label{subsec:abstract KFC proof}

We begin by considering an abstract convolutional network with $L$ convolution layers.  Let $\XX$ and $\YY$ denote the input and output spaces of this network respectively. Recall that the layerwise weight space $\WW_l$ is the space of affine transformations between $\AA_{l-1}^{|\Delta|}$ and $\ZZ_l$. The weight space of this network is the direct product of all layerwise weight spaces 
\[
\WW=\WW_1\times\dots\times\WW_L.
\]  

Given an input $\xi\in\XX$ and parameter $\omega=(\omega_1,\dots,\omega_L)\in\WW$, denote the network output by $f(\xi,\omega)$. Now, for every $l\in\{1,\dots,L\}$, define the following maps
\begin{itemize}
	\item $\psi_{\xi}^l:\WW_l\to\ZZ_l^{|\TT|}$ which sends layerwise parameters $\omega_l$ to  pre-activations $\zeta_l^{(:)}$ by evaluating local activations $\alpha_{l-1}$ across every spatial location $t\in\TT$
	\item $\varphi_{\xi}^l:\ZZ_l^{|\TT|}\to\YY$ which sends pre-activations $\zeta_l^{(:)}$ to $f(\xi,\omega)$
\end{itemize}
Again, $\psi_{\xi}^l$ is trivially a smooth map from its definition. The map $\varphi_{\xi}^l$ includes all operations in convolutional networks such as max-pooling and response normalization. We make an assumption here that all these operations are smooth. (While this is not the case for common operations such as ReLU and max-pooling, we conjecture that the non-smooth case can be addressed by taking limits of smooth functions.) Finally, we define the map $\Psi_{\xi}^l:\WW_l\to\YY$ as the composition $\Psi_{\xi}^l=\varphi_{\xi}^l\circ\psi_{\xi}^l$.

Let $g$ be a metric on $\YY$ and consider the pullback $(\varphi_{\xi}^l)^*g$ restricted to a single spatial location $t$ which we denote by $(\varphi_{\xi,t}^l)^*g$. More concretely, this metric is computed by assuming components of the tangent vector at all other spatial locations are zero. Now, let us take $A$, $Z$, $W$ in Section~\ref{subsec:independence metric} to be
\[
A=\AA_{l-1}^{|\Delta|},\ Z=\ZZ_l,\ W=\WW_l,
\] 
and the metric on $Z=\ZZ_l$ to be $(\varphi_{\xi,t}^l)^*g$. Summing over every spatial location $t\in\TT$, the independence metric on $\WW_l$ here is
\begin{equation} \label{eq:KFC metric components}
g_{\mathrm{ind}}^l(\omega_l) = |\TT|(\EE_{\TT}[\alpha_{l-1}^{(:,t)}\otimes\alpha_{l-1}^{(:,t)}]\otimes\EE_{\TT}[(\varphi_{\xi,t}^l)^*g(\zeta_l^{(t)})]).
\end{equation}

\begin{Definition} \label{KFC metric definition}
The K-FAC metric on the weight space $\WW$ of an abstract convolutional network is defined as
\[
g_{\mathrm{KFAC}}(\omega) = g^1_{\mathrm{ind}}(\omega_1) + \dots + g^L_{\mathrm{ind}}(\omega_L),
\]
where the sum above is as defined in Eqn.~\ref{eq:product metric on tangent vectors} and each $g_{\mathrm{ind}}^l$ is as given in Eqn.~\ref{eq:KFC metric components}.
\end{Definition}

\begin{Theorem} \label{abstract KFC theorem}
Let $g$ be a metric on $\YY$. Then, $g_{\mathrm{KFAC}}$ given in Definition~\ref{KFC metric definition} is indeed a metric on the weight space $\WW$ of an abstract convolutional network. Moreover, if we assume that the expected pullback of $g$ restricted to a single spatial location $t\in\TT$,
\[
\EE_{\xi}[(\Psi_{\xi,t}^l)^*g],
\]
under the map $\Psi_{\xi}^l:\WW_l\to\YY$ is a nondegenerate metric on the layerwise weight space $\WW_l$ for every $l$, then $g_{\mathrm{KFAC}}$ is also nondegenerate. 
\end{Theorem}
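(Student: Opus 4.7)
The plan is to parallel the proof of Theorem~\ref{abstract KFAC theorem for MLP}, applying Theorem~\ref{general KFAC theorem} layer-by-layer and then assembling the results via the additive metric construction of Eqn.~\ref{eq:product metric on tangent vectors}. The only genuinely new ingredient compared to the MLP case is the extra spatial index $t\in\TT$; everything else is formal.

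For the first assertion, I would fix a layer $l$ and apply Theorem~\ref{general KFAC theorem} with $A=\AA_{l-1}^{|\Delta|}$, $Z=\ZZ_l$, $W=\WW_l$. For each $(\xi,t)$, the spatially localized pullback $(\varphi_{\xi,t}^l)^*g$ is a metric on $\ZZ_l$ because the pullback of a metric along a smooth map is a metric. The independence construction then yields a tensor product of two symmetric PSD $2$-tensors---expectations of $\alpha_{l-1}^{(:,t)}\otimes\alpha_{l-1}^{(:,t)}$ and of $(\varphi_{\xi,t}^l)^*g(\zeta_l^{(t)})$ respectively, taken over the joint distribution of inputs and spatial locations---which, after scaling by the positive constant $|\TT|$, is exactly $g_{\mathrm{ind}}^l$ in Eqn.~\ref{eq:KFC metric components}. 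The metric $g_{\mathrm{KFAC}}$ is then the additive metric on $\WW=\WW_1\times\cdots\times\WW_L$ built from the $g_{\mathrm{ind}}^l$, which is automatically a metric (symmetry and positive-semidefiniteness are preserved under block-diagonal assembly).

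For the second assertion, I would use functoriality of pullback. Writing $\Psi_{\xi,t}^l=\varphi_{\xi,t}^l\circ\psi_{\xi,t}^l$, where $\psi_{\xi,t}^l:\WW_l\to\ZZ_l$ evaluates layerwise parameters against the expanded activations at spatial location $t$, gives $(\Psi_{\xi,t}^l)^*=(\psi_{\xi,t}^l)^*\circ(\varphi_{\xi,t}^l)^*$. The hypothesis that $\EE_{\xi}[(\Psi_{\xi,t}^l)^*g]$ is nondegenerate on $\WW_l$ therefore rephrases as nondegeneracy of $\EE_{\xi}[(\psi_{\xi,t}^l)^*((\varphi_{\xi,t}^l)^*g)]$, which by the second assertion of Theorem~\ref{general KFAC theorem} forces both tensor factors of $g_{\mathrm{ind}}^l$ to be positive-definite. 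Nondegeneracy of an additive metric follows from nondegeneracy of its summands (its coordinate representation is block diagonal with nondegenerate blocks), so $g_{\mathrm{KFAC}}$ is nondegenerate.

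The main obstacle is purely bookkeeping: whereas the MLP case has a single evaluation map per layer, the CNN case has a family $\{\psi_{\xi,t}^l\}_{t\in\TT}$ of evaluation maps indexed by spatial location. The subtlety that requires care is that $g_{\mathrm{ind}}^l$ is \emph{not} a sum over $t$ of per-location independence metrics, but rather a single independence metric built from the $(\xi,t)$-averaged activation outer product tensored with the $(\xi,t)$-averaged spatially localized pullback metric. Once this reformulation is recognized, Theorem~\ref{general KFAC theorem} can be invoked once per layer and the remainder of the argument is a direct transcription of the MLP proof.
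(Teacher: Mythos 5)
Your proposal is correct and follows essentially the same route as the paper: apply Theorem~\ref{general KFAC theorem} layerwise with $A=\AA_{l-1}^{|\Delta|}$, $Z=\ZZ_l$, $W=\WW_l$ and the spatially localized pullback metric, note that averaging over $\TT$ and scaling by $|\TT|$ preserves the metric property, assemble via the additive metric, and obtain nondegeneracy from functoriality of pullback together with the second assertion of Theorem~\ref{general KFAC theorem}. Your closing observation that $g_{\mathrm{ind}}^l$ is a single independence metric over the joint $(\xi,t)$ distribution rather than a sum of per-location independence metrics is accurate and, if anything, a slightly cleaner reading of Eqn.~\ref{eq:KFC metric components} than the paper's ``summing over every spatial location'' phrasing.
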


\begin{proof}
The proof of this theorem mirrors the proof given earlier for Theorem~\ref{abstract KFAC theorem for MLP}. By Theorem~\ref{general KFAC theorem}, we know that
\[
\EE[\alpha_{l-1}^{(:,t)}\otimes\alpha_{l-1}^{(:,t)}]\otimes\EE[(\varphi_{\xi,t}^l)^*g(\zeta_l^{(t)})],
\]
is a metric on $\WW_l$. Since taking expectation over the set of spatial locations $\TT$ and multiplying by the scale factor $|\TT|$ preserves the metric properties, we obtain that $g_{\mathrm{ind}}^l$ in Eqn.~\ref{eq:KFC metric components} defines a metric on $\WW_l$. Consequently, $g_{\mathrm{KFAC}}$ determines a metric on $\WW$. To prove the latter assertion, note that by the functorial property of pullback operations, 
\[
\EE_{\xi}[(\psi_{\xi,t}^l)^*((\varphi_{\xi,t}^l)^*g)]
\]
is nondegenerate. Using the second assertion of Theorem~\ref{general KFAC theorem} yields that $g_{\mathrm{ind}}^l$ is nondegenerate which implies that this is true also for $g_{\mathrm{KFAC}}$.
\end{proof}

We conclude this section with a proof of Theorem~\ref{KFC invariances theorem}. Our proof is coordinate-free and given in exactly the same manner as the proof of Theorem~\ref{KFAC invariances theorem} at the end of Section~\ref{sec:abstract-mlp}.

\vspace{.05in}

\noindent {\bf {Coordinate-free proof of Theorem~\ref{KFC invariances theorem}}.}  As shown earlier in Section~\ref{subsubsec:abstract conv layer}, each convolution layer of $\NN$ and $\NN^{\dagger}$ correspond to two different choices of parameterizations for the same underlying abstract convolution layer. Hence, $\NN$ and $\NN^{\dagger}$ must compute the same function. 

Assume that the metric $g$ on the output space $\YY$ in Theorem~\ref{abstract KFC theorem} is the output Fisher metric $g_{F,\mathrm{out}}$. For each spatial location $t\in\TT$, the pullback under $\varphi_{\xi,t}^l$ is
\[
(\varphi_{\xi,t}^l)^*g_{F,\mathrm{out}}(\zeta_l^{(t)}) = \EE[d\LL_{\zeta_l^{(t)}}\otimes d\LL_{\zeta_l^{(t)}}].
\]
Now, choose coordinate systems on $\AA_{l-1}$ and $\ZZ_l$. This induces coordinates for $\AA_{l-1}^{|\Delta|}$ and we write
\[
\llbracket\alpha_{l-1}^{(:,t)}\rrbracket = \ba_{l-1}^{(:,t)},\ \llbracket\zeta_l^{(t)}\rrbracket = \bz_l^{(t)},\ \llbracket(\varphi_{\xi,t}^l)^*g_{F,\mathrm{out}}(\zeta_l^{(t)})\rrbracket = \llbracket\EE[d\LL_{\zeta_l^{(t)}}\otimes d\LL_{\zeta_l^{(t)}}]\rrbracket = \EE[\DD\bz_l^{(t)}(\DD\bz_l^{(t)})^{\top}].
\]
Using Proposition~\ref{general K-FAC metric coordinates prop}, the independence metric in Eqn.~\ref{eq:KFC metric components} can be expressed in coordinates as follows
\[
\llbracket g_{\mathrm{ind}}^l(\omega_l)\rrbracket = |\TT|(\EE_{\TT}[\bar{\ba}_{l-1}^{(:,t)}(\bar{\ba}_{l-1}^{(:,t)})^{\top}]\otimes\EE_{\TT}[\DD\bz_l^{(t)}(\DD\bz_l^{(t)})^{\top}]),
\]
which is exactly $\hat{\bF}(\bw)_{l,l}$ given earlier in Eqn.~\ref{eq:KFC approximation}. Furthermore, $\llbracket g_{\mathrm{KFAC}}(\omega)\rrbracket$ is exactly the K-FAC approximation $\hat{\bF}(\bw)$. Thus, the K-FAC update rule in Eqn.~\ref{eq:KFC update rule} is simply a natural gradient update rule with respect to the K-FAC metric $g_{\mathrm{KFAC}}$ for abstract convolutional networks. Lastly, if $g_{\mathrm{KFAC}}$ is a nondegenerate metric; which is true for example if the assumptions in the second assertion of Theorem~\ref{abstract KFC theorem} hold, then we can conclude that these updates are invariant to any affine reparameterizations of the model. \qed

\section{coordinate-free k-fac for recurrent networks} \label{sec:abstract-RNN}

In this section, we study Kronecker factorization for recurrent networks closely following~\cite{martens2018kronecker}. We give a mathematical formulation of the recurrent computation step of these networks in both coordinate-dependent and coordinate-independent scenarios. We proceed to give the Kronecker factorization of the Fisher matrix for recurrent networks and then state the invariance theorem for this optimization method. Lastly, we prove the invariance theorem in the same way we did for MLPs and convolutional networks in Sections~\ref{sec:abstract-mlp} and~\ref{sec:abstract-CNN} respectively. 

\subsection{Recurrent networks}~\mbox{}

As in the case of convolutional networks in Section~\ref{sec:abstract-CNN}, it is not necessary to write out the full structure of a recurrent network. Rather, we focus on the recurrent computation since the central object of our interest, the Fisher matrix for recurrent networks, only involves recurrent weights.

\subsubsection{Computational step}~\mbox{}

Let $T$ be the number of different time steps and $\bT=\{1,\dots,T\}$. We use $t$ to index the time step. Throughout, we assume that all sequences are of fixed length $T$. For an input $\bx_t$ at every $t$, the recurrent network maps this to an output $\bo_t$. Essentially, the network maps input sequences $\bx=(\bx_1,\dots,\bx_T)$ to output sequences $\bo=(\bo_1,\dots,\bo_T)$. The computation, at every $t$, is
\begin{align*}
\bz_t & = \bW\ba_{t-1} + \bb \\
\bz_t' & = \bz_t + \mathbf{V}\bx_t \\
\ba_t & = \phi(\bz_t')
\end{align*}
where $\ba_{t-1}$ is an activation vector, $\bz_t$ is a pre-activation vector, $\bW$ is a recurrent weight matrix, $\mathbf{V}$ is a weight matrix, $\bb$ is a recurrent bias vector, and $\phi$ is a fixed nonlinear activation function. For the remainder of this section, we focus on the first equation 
\begin{equation} \label{eq:recurrent computation}
\bz_t = \bW\ba_{t-1} + \bb,
\end{equation}
which represents the recurrent computation step. The latter two equations can be handled by the previous K-FAC analysis for MLPs given in Section~\ref{sec:abstract-mlp}. The transformed recurrent computation step is defined as
\begin{equation} \label{eq:transformed recurrent computation}
\bz_t^{\dagger} = \bW^{\dagger}\ba_{t-1}^{\dagger} + \bb^{\dagger}.
\end{equation}
The relationship between transformed parameters $(\bW^{\dagger}\ \bb^{\dagger})$ and original parameters $(\bW\ \bb)$ is given by a standard change-of-basis formula as in Eqn.~\ref{eq:change of basis for parameters}.

\subsubsection{Abstract recurrent network}~\mbox{} \label{subsubsec:abstract RNN}

We now describe an abstract recurrent network formally. 
\begin{itemize}
\item Local activations at each time step $t$ are elements $\alpha_t$ in an affine space $\AA$
\item Activations are elements $\alpha=\{\alpha_t\}_{t\in\bT}$ in the affine space $\AA^T$
\item Local pre-activations at each $t$ are elements $\zeta_t$ in an affine space $\ZZ$
\item Pre-activations are elements $\zeta=\{\zeta_t\}_{t\in\bT}$ in the affine space $\ZZ^T$
\item Parameters are affine transformations $\omega$ between $\AA$ and $\ZZ$. The collection of these transformations is an affine space in its own right which we denote by $\WW$ and refer to as the weight space
\item Network inputs and outputs at each $t$ are elements $\xi_t$, $\upsilon_t$ in affine spaces $\XX$, $\YY$ respectively. The input and output spaces are $\XX^T$ and $\YY^T$ respectively; furthermore, elements here are written as $\xi=\{\xi_t\}_{t\in\bT}$ and $\upsilon=\{\upsilon_t\}_{t\in\bT}$.
\end{itemize}
For every $t$, the abstract recurrent computation step is
\[
\zeta_t = \omega(\alpha_{t-1}).
\]

A choice of parameterization for the abstract recurrent network consists of choosing affine bases for $\AA$, $\ZZ$, $\XX$ and $\YY$. Since we have bases for $\AA$ and $\ZZ$, this induces a natural basis for $\WW$. If we use exactly the same change-of-basis analysis given in Section~\ref{sec:abstract-mlp}, then the recurrent network with computation given by Eqn.~\ref{eq:recurrent computation} and the transformed version in Eqn.~\ref{eq:transformed recurrent computation} correspond to two different parameterizations of the same abstract recurrent network. 

\subsection{K-FAC for recurrent networks}~\mbox{}

We review the recent Kronecker factorization for recurrent networks method in~\cite{martens2018kronecker}. Recall that for every time step $t$, the recurrent computation can be written as
\[
\bz_t = \bar{\bW}\bar{\ba}_{t-1},
\]
where $\bar{\bW}=[\bW\ \bb]$ and $\bar{\ba}_{t-1}^{\top}=[\bar{\ba}_{t-1}^{\top}\ 1]^{\top}$. Using backpropagation, the log-likelihood gradient is given by $\DD\bz_t\bar{\ba}_{t-1}^{\top}$. The total contribution to the gradient across all $t$ is the sum
\[
\DD\bar{\bW} = \sum^T_{t=1}\DD\bz_t\bar{\ba}_{t-1}^{\top}.
\]
For an input-target pair $(\bx,\by)$, the Fisher matrix $\bF(\bar{\bW})$ for recurrent networks is defined as
\[
\bF(\bar{\bW}) = \EE_{\bx,\by}[\ovec(\DD\bar{\bW})\ovec(\DD\bar{\bW})^{\top}].
\]
Finally, the K-FAC approximation $\hat{\bF}(\bar{\bW})$ to $\bF(\bar{\bW})$ for recurrent networks is defined as
\begin{equation} \label{eq:KFAC approximation RNN}
\hat{\bF}(\bar{\bW}) = T(\EE_{\bT}[\bar{\ba}_{t-1}\bar{\ba}_{t-1}^{\top}]\otimes\EE_{\bT}[\DD\bz_t\DD\bz_t^{\top}]).
\end{equation}

\begin{Remark}
As in the case of convolution layers, there is weight sharing in recurrent networks (across time here instead of spatial locations) and so it is not enough to just assume statistical independence between activations and pre-activation derivatives to make the K-FAC approximation here. We defer the reader to~\cite{martens2018kronecker} for detailed explanations on how the K-FAC approximation is derived for recurrent networks.
\end{Remark}

For an objective function $h(\bar{\bW})$ on the weight space of the recurrent network, K-FAC minimizes $h(\bar{\bW})$ by the update rule
\begin{equation} \label{eq:KFAC-RNN update rule}
\bar{\bW}\leftarrow\bar{\bW}-\epsilon\hat{\bF}(\bar{\bW})^{-1}\nabla h(\bar{\bW}).
\end{equation}
Lastly, we present the invariance theorem for K-FAC on recurrent networks. 

\begin{Theorem} \label{KFAC-RNN invariances theorem}
Let $\NN$ be a recurrent network with recurrent parameters $(\bW\ \bb)$. Suppose that we have a recurrent network $\NN^{\dagger}$ with recurrent parameters $(\bW^{\dagger}\ \bb^{\dagger})$ and the relationship between $(\bW\ \bb)$ and $(\bW^{\dagger}\ \bb^{\dagger})$ is a change-of-basis transformation as given in Eqn.~\ref{eq:change of basis for parameters}. Then, the networks $\NN$ and $\NN^{\dagger}$ compute the same function. Furthermore, the K-FAC updates are equivalent, in the sense that the resulting networks compute the same function.
\end{Theorem}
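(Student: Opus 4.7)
The plan is to mirror the coordinate-free proofs of Theorem~\ref{KFAC invariances theorem} and Theorem~\ref{KFC invariances theorem}: build an intrinsic K-FAC metric on the abstract weight space $\WW$ of a recurrent network whose coordinate representation matches $\hat{\bF}(\bar{\bW})$, and then appeal to Theorem~\ref{NG invariances theorem}.

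First, I would dispense with the functional-equivalence claim. By the change-of-basis computation in Section~\ref{subsubsec:abstract RNN}, the recurrent network $\NN$ in Eqn.~\ref{eq:recurrent computation} and the transformed version $\NN^{\dagger}$ in Eqn.~\ref{eq:transformed recurrent computation} are simply two choices of affine bases $\iota,\kappa$ for the affine spaces $\AA$ and $\ZZ$ of the \emph{same} abstract recurrent network. Since the abstract recurrent computation $\zeta_t=\omega(\alpha_{t-1})$ is coordinate-free, both networks realize the same function.

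Next, I would construct the K-FAC metric on $\WW$. For each $t\in\bT$ and input sequence $\xi$, define $\psi_\xi^t:\WW\to\ZZ$ as the evaluation map sending $\omega$ to $\zeta_t=\omega(\alpha_{t-1})$, and $\varphi_\xi^t:\ZZ\to\YY^T$ as the map from a single pre-activation slice (holding all other time-step components fixed) to the full output sequence, so that $\Psi_\xi^t=\varphi_\xi^t\circ\psi_\xi^t$ expresses the weight-to-output map restricted to the contribution at time $t$. For a metric $g$ on $\YY^T$, the pullback $(\varphi_\xi^t)^*g$ is a metric on $\ZZ$. Applying Theorem~\ref{general KFAC theorem} with $A=\AA$, $Z=\ZZ$, $W=\WW$, and metric $(\varphi_\xi^t)^*g$, yields an independence metric
\[
g_{\mathrm{ind}}^t(\omega) = \EE[\alpha_{t-1}\otimes\alpha_{t-1}]\otimes\EE[(\varphi_\xi^t)^*g(\zeta_t)].
\]
I would then \emph{define}, in parallel with Definition~\ref{KFC metric definition},
\[
g_{\mathrm{KFAC}}(\omega) = T\bigl(\EE_{\bT}[\alpha_{t-1}\otimes\alpha_{t-1}]\otimes\EE_{\bT}[(\varphi_\xi^t)^*g(\zeta_t)]\bigr),
\]
where the expectations over $\bT$ encode the across-time weight sharing (exactly analogous to how Eqn.~\ref{eq:KFC metric components} encodes across-space weight sharing for CNNs). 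The verification that $g_{\mathrm{KFAC}}$ is indeed a metric, and is nondegenerate under the obvious pullback-nondegeneracy hypothesis, is a word-for-word copy of the argument in Theorem~\ref{abstract KFC theorem}.

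Taking $g=g_{F,\mathrm{out}}$ and choosing the coordinates $\iota$, Proposition~\ref{general K-FAC metric coordinates prop} gives
\[
\llbracket g_{\mathrm{KFAC}}(\omega)\rrbracket \;=\; T\bigl(\EE_{\bT}[\bar{\ba}_{t-1}\bar{\ba}_{t-1}^{\top}]\otimes\EE_{\bT}[\DD\bz_t\DD\bz_t^{\top}]\bigr),
\]
which is exactly $\hat{\bF}(\bar{\bW})$ from Eqn.~\ref{eq:KFAC approximation RNN}. Hence the K-FAC update in Eqn.~\ref{eq:KFAC-RNN update rule} is nothing but the coordinate expression of the intrinsic update $\omega\leftarrow\omega-\epsilon\,g_{\mathrm{KFAC}}(\omega)^{-1}dh(\omega)$. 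Applying Theorem~\ref{NG invariances theorem} (assuming $g_{\mathrm{KFAC}}$ nondegenerate, or invoking the damping remark otherwise) immediately gives invariance of the update under all affine reparameterizations of $\AA$ and $\ZZ$, in particular under the change-of-basis relating $(\bW\ \bb)$ to $(\bW^{\dagger}\ \bb^{\dagger})$.

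The main conceptual obstacle is the choice of the map $\varphi_\xi^t$: because the recurrent weights are reused at every time step, the actual dependence of the network output on $\omega$ routes through every $\zeta_s$, not just $\zeta_t$. I would argue, as in~\cite{martens2018kronecker} and the CNN case, that the metric we want for the K-FAC approximation is the one obtained by treating each time slice as an independent evaluation of $\omega$ and then averaging, which is precisely what the $|\bT|$-factor and the $\EE_{\bT}$ achieve. Once this design choice is fixed, the remainder is routine bookkeeping using the machinery already developed for MLPs and CNNs.
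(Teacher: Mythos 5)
Your proposal is correct and follows essentially the same route as the paper: establish functional equivalence by viewing $\NN$ and $\NN^{\dagger}$ as two parameterizations of one abstract recurrent network, build the K-FAC metric from the independence metric of Theorem~\ref{general KFAC theorem} averaged over $\bT$, match its coordinate representation to $\hat{\bF}(\bar{\bW})$ via Proposition~\ref{general K-FAC metric coordinates prop}, and invoke Theorem~\ref{NG invariances theorem}. The only cosmetic difference is that you take $\varphi_\xi^t$ to land in $\YY^T$ whereas the paper's $\varphi_{\xi,t}$ lands in the per-time-step space $\YY$; your closing remark on the weight-sharing subtlety matches the paper's own remark deferring the justification to~\cite{martens2018kronecker}.
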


We now proceed to the last section of this paper to give a coordinate-free proof of this theorem. The method of proof mirrors exactly the proofs given previously for Theorems~\ref{KFAC invariances theorem} and~\ref{KFC invariances theorem}.

\subsection{Coordinate-free K-FAC for recurrent networks}~\mbox{}

Given an input $\xi=\{\xi_t\}_{t\in\bT}\in\XX^T$ and parameter $\omega\in\WW$, denote the network output by $f(\xi,\omega)$. For a specific time step $t$, consider the following maps:
\begin{itemize}
	\item $\psi_{\xi,t}:\WW\to\ZZ$ which sends parameters $\omega$ to pre-activations $\zeta_t$ by evaluation at activations $\alpha_{t-1}$
	\item $\varphi_{\xi,t}:\ZZ\to\YY$ which sends $\zeta_t$ to outputs 
\end{itemize}
In addition, we define the map $\Psi_{\xi,t}:\WW\to\YY$ as the composition $\Psi_{\xi,t}=\varphi_{\xi,t}\circ\psi_{\xi,t}$.

Let $g$ be a metric on $\YY$. The pullback $\varphi_{\xi,t}^*g$ then defines a metric on $\ZZ$. Now, we take $A$, $Z$, $W$ in Section~\ref{subsec:independence metric} to be
\[
A=\AA,\ Z=\ZZ,\ W=\WW,
\]
and the metric on $Z=\ZZ$ to be $\varphi_{\xi,t}^*g$. Summing over all time steps $t\in\bT$, we make the following definition which arises from the independence metric in Section~\ref{subsec:KFAC metric}:

\begin{Definition} \label{KFAC-RNN metric definition}
The K-FAC metric on the weight space $\WW$ of an abstract recurrent network is defined as
\begin{equation} \label{eq:KFAC-RNN metric decomposition}
g_{\mathrm{KFAC}}(\omega) = T(\EE_{\bT}[\alpha_{t-1}\otimes\alpha_{t-1}]\otimes\EE_{\bT}[\varphi_{\xi,t}^*g(\zeta_t)]).
\end{equation}
\end{Definition}

\begin{Theorem} \label{abstract KFAC-RNN theorem}
Let $g$ be a metric on $\YY$. Then, $g_{\mathrm{KFAC}}$ given in Definition~\ref{KFAC-RNN metric definition} is a metric on the weight space $\WW$ of an abstract recurrent network. Moreover, if we assume that the expected pullback of $g$,
\[
\EE_{\xi}[\Psi_{\xi,t}^*g],
\]
under the smooth map $\Psi_{\xi,t}:\WW\to\YY$ is a nondegenerate metric, then $g_{\mathrm{KFAC}}$ is also nondegenerate.
\end{Theorem}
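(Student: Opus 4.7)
The proof proceeds by the same two-step template used for Theorem~\ref{abstract KFAC theorem for MLP} and Theorem~\ref{abstract KFC theorem}, substituting the time index $t\in\bT$ for the layer or spatial index. Setting $A=\AA$, $Z=\ZZ$, $W=\WW$ in Section~\ref{subsec:independence metric} and endowing $Z$ with the pullback metric $\varphi_{\xi,t}^*g$, Theorem~\ref{general KFAC theorem} guarantees that, for every fixed $t$, the expression $\EE[\alpha_{t-1}\otimes\alpha_{t-1}]\otimes\EE[\varphi_{\xi,t}^*g(\zeta_t)]$ defines a metric on $\WW$. To obtain the object in Eqn.~\ref{eq:KFAC-RNN metric decomposition}, each factor is averaged separately over $t\in\bT$; since averages of symmetric positive-semidefinite $2$-tensors remain symmetric and PSD, tensor products of symmetric PSD tensors are symmetric PSD, and multiplication by the positive scalar $T$ preserves these properties, the resulting $g_{\mathrm{KFAC}}(\omega)$ is a metric on $\WW$. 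This settles the first assertion.

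For the second assertion, I would replay the positive-definiteness contradiction from the proof of Theorem~\ref{general KFAC theorem} in this time-averaged setting. By functoriality of pullback, $\Psi_{\xi,t}^*=\psi_{\xi,t}^*\circ\varphi_{\xi,t}^*$, so the assumed nondegeneracy of $\EE_{\xi}[\Psi_{\xi,t}^*g]$ is equivalent to nondegeneracy of $\EE_{\xi}[\psi_{\xi,t}^*(\varphi_{\xi,t}^*g)]$. Assume for contradiction that the activation factor $\EE_{\bT}[\alpha_{t-1}\otimes\alpha_{t-1}]$ admits a nonzero null direction $\lambda\in TW\otimes T^*Z$. Since this tensor is an expectation of PSD tensors vanishing on $\lambda$, the integrand $(\alpha_{t-1}\otimes\alpha_{t-1})(\lambda\otimes\lambda)$ is zero almost surely under the joint distribution of $t$ and $\alpha_{t-1}$. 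Expanding $\psi_{\xi,t}^*(\varphi_{\xi,t}^*g)$ via Lemma~\ref{general KFAC metric lemma} as $\EE[\alpha_{t-1}\otimes\phi_t\otimes\alpha_{t-1}\otimes\phi_t]$ for a random covector $\phi_t$ and testing against $\lambda\otimes\mu$ for any $\mu\in TZ$ gives
\[
\EE_{\bT,\xi}\bigl[\psi_{\xi,t}^*(\varphi_{\xi,t}^*g)\bigr](\lambda\otimes\mu,\lambda\otimes\mu) = \EE\bigl[(\alpha_{t-1}\otimes\alpha_{t-1})(\lambda\otimes\lambda)\cdot(\phi_t\otimes\phi_t)(\mu\otimes\mu)\bigr] = 0,
\]
contradicting nondegeneracy of the averaged pullback. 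An identical argument on the dual side forces the pre-activation factor $\EE_{\bT}[\varphi_{\xi,t}^*g(\zeta_t)]$ to be positive-definite, so their tensor product $g_{\mathrm{KFAC}}$ is nondegenerate.

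The one point of care I anticipate is bookkeeping which probability space each expectation lives in once $t$ is promoted to a random variable. In the MLP proof the expectations over activations and pre-activations are taken layerwise with no cross-layer averaging, whereas here the sum over $\bT$ is folded into the independence factors themselves. Verifying that this joint averaging does not break the ``PSD expectation equals zero implies integrand zero almost surely'' step of the Theorem~\ref{general KFAC theorem} argument is the only substantive check beyond citing the earlier results; everything else is a direct transcription of the MLP and convolutional proofs with $\bT$ in place of the layer or spatial index.
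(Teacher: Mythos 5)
Your proof is correct and follows essentially the same route as the paper's: both assertions are reduced to Theorem~\ref{general KFAC theorem} applied with $A=\AA$, $Z=\ZZ$, $W=\WW$ and the metric $\varphi_{\xi,t}^*g$ on $Z$, with nondegeneracy obtained via the functoriality $\Psi_{\xi,t}^*=\psi_{\xi,t}^*\circ\varphi_{\xi,t}^*$. The only difference is that you unfold the positive-definiteness contradiction explicitly to check that folding the average over $\bT$ into each Kronecker factor is harmless, whereas the paper simply cites the second assertion of Theorem~\ref{general KFAC theorem}; this is sensible extra care, not a different approach.
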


\begin{proof}
The proof of this theorem is analogous to the proofs of Theorems~\ref{abstract KFAC theorem for MLP} and~\ref{abstract KFC theorem}. From Theorem~\ref{general KFAC theorem}, we have that
\[
\EE[\alpha_{t-1}\otimes\alpha_{t-1}]\otimes\EE[\varphi_{\xi,t}^*g(\zeta_t)]
\]
is a metric on $\WW$. Since this remains true after taking expectation over the set of time steps $\bT$ and multiplying by the scale factor $T$, we can conclude that $g_{\mathrm{KFAC}}$ is a metric on $\WW$. For the nondegeneracy statement, using the functorial property of pullbacks, we know that
\[
\EE_{\xi}[\psi_{\xi,t}^*(\varphi_{\xi,t}^*g)]
\]
is nondegenerate. Then, $g_{\mathrm{KFAC}}$ is nondegenerate by the second assertion of Theorem~\ref{general KFAC theorem}.
\end{proof}

\vspace{.05in}

\noindent {\bf {Coordinate-free proof of Theorem~\ref{KFAC-RNN invariances theorem}}.} 
We conclude this paper with a coordinate-free proof of Theorem~\ref{KFAC-RNN invariances theorem}. As mentioned at the end of Section~\ref{subsubsec:abstract RNN}, $\NN$ and $\NN^{\dagger}$ correspond to two different choices of parameterizations for the same underlying abstract recurrent network and so they must compute the same function.

Assume that the metric $g$ on $\YY$ in Theorem~\ref{abstract KFAC-RNN theorem} above is the output Fisher metric $g_{F,\mathrm{out}}$. Then, the pullback under $\varphi_{\xi,t}$ is
\[
\varphi_{\xi,t}^*g_{F,\mathrm{out}}(\zeta_t) = \EE[d\LL_{\zeta_t}\otimes d\LL_{\zeta_t}].
\]
Now, choose coordinate systems for $\AA$ and $\ZZ$. We can write
\[
\llbracket\alpha_{t-1}\rrbracket = \ba_{t-1},\ \llbracket\zeta_t\rrbracket = \bz_t,\ \llbracket\varphi_{\xi,t}^*g_{F,\mathrm{out}}(\zeta_t)\rrbracket = \EE[\DD\bz_t\DD\bz_t^{\top}].
\]
By Proposition~\ref{general K-FAC metric coordinates prop}, the K-FAC metric in Eqn.~\ref{eq:KFAC-RNN metric decomposition} can be represented in these chosen coordinates as
\[
\llbracket g_{\mathrm{KFAC}}(\omega)\rrbracket =  T(\EE_{\bT}[\bar{\ba}_{t-1}\bar{\ba}_{t-1}^{\top}]\otimes\EE_{\bT}[\DD\bz_t\DD\bz_t^{\top}]),
\]
which is exactly the K-FAC approximation $\hat{\bF}(\bar{\bW})$ in Eqn.~\ref{eq:KFAC approximation RNN}. Thus, the K-FAC update rule in Eqn.~\ref{eq:KFAC-RNN update rule} is a natural gradient update with respect to the K-FAC metric $g_{\mathrm{KFAC}}$ for abstract recurrent networks. If we suppose that $g_{\mathrm{KFAC}}$ is nondegenerate, then these updates are invariant to any affine reparameterizations of the model. \qed 

\bibliographystyle{plain}
\bibliography{bibliography}

\newpage
\appendix

\section{appendix}

\subsection{Tangent space of vector spaces and affine spaces}~\mbox{}

\begin{Theorem} 
Let $V$ be a finite-dimensional vector space. For each point $p\in V$, there is a canonical isomorphism $V\to T_pV$. From now on, we suppress $p$ in $T_pV$ and write $TV$ when denoting tangent spaces of $V$.
\end{Theorem}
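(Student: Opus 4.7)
The plan is to construct the canonical isomorphism explicitly using the directional derivative, and then argue that it coincides with the usual coordinate-partials basis of $T_pV$ under any choice of basis, which establishes both that the map is an isomorphism and that it is basis-independent.

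First, I would define a map $\Phi_p : V \to T_pV$ as follows. For each $v \in V$, observe that the scaling action $t \mapsto p + tv$ defines a smooth curve $\gamma_v : \mathbb{R} \to V$ with $\gamma_v(0) = p$, using only the vector space structure on $V$ (no choice of basis). Set
\[
\Phi_p(v) := \gamma_v'(0) \in T_pV,
\]
equivalently realized as the derivation $f \mapsto \frac{d}{dt}\big|_{t=0} f(p + tv)$ for $f \in C^\infty(V)$. Linearity of $\Phi_p$ in $v$ follows directly from the chain rule. Since the construction depends only on the operations $+$ and scalar multiplication in $V$, and never on a choice of basis, $\Phi_p$ is manifestly canonical.

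Next, I would establish bijectivity by comparison with a coordinate basis. Choose any basis $\{e_1, \dots, e_n\}$ of $V$ with dual coordinates $(x^1, \dots, x^n)$. A standard result in differential geometry gives $T_pV$ the basis $\{\partial/\partial x^i|_p\}$. A direct computation shows $\Phi_p(e_i) = \partial/\partial x^i|_p$, since for any smooth $f$,
\[
\Phi_p(e_i)(f) = \frac{d}{dt}\bigg|_{t=0} f(p + t e_i) = \frac{\partial f}{\partial x^i}(p).
\]
Hence $\Phi_p$ sends a basis of $V$ to a basis of $T_pV$, so it is a linear isomorphism. Dimension count (both spaces are $n$-dimensional) also provides an immediate sanity check.

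The substantive content is mild: everything reduces to the fact that $V$ carries a natural smooth structure in which translations are smooth and that on a vector space the coordinate partials agree with directional derivatives along basis vectors. The only step that requires care is confirming that the map $\Phi_p$ is well-defined as a tangent vector, that is, that $f \mapsto \frac{d}{dt}|_{t=0} f(p+tv)$ is $\mathbb{R}$-linear and satisfies the Leibniz rule; both follow from elementary calculus. I expect the main conceptual point, rather than an obstacle, to be emphasizing the distinction between an abstract isomorphism (which always exists between finite-dimensional spaces of the same dimension) and the canonical nature of $\Phi_p$: the latter rests exactly on the fact that the formula $p + tv$ uses no auxiliary data. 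Because $\Phi_p$ is canonical and works uniformly for every $p \in V$, the notation $TV$ (suppressing the basepoint) is justified, matching the convention invoked throughout the paper for affine spaces via Corollary~\ref{tangent spaces of affine spaces corollary}.
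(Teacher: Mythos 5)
Your proposal is correct and follows essentially the same route as the paper: both define the canonical map by sending $v$ to the directional-derivative tangent vector $f \mapsto \frac{d}{dt}\big|_{t=0} f(p+tv)$ and observe that this uses no choice of basis. You additionally spell out the bijectivity check via a coordinate basis, which the paper leaves implicit; this is a welcome completion rather than a divergence.
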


\begin{proof}
For any element $v\in V$, we can associate a tangent vector $D_v|_p$ at $p$ defined by
\[
D_v|_p f=D_v f(p) = \frac{d}{dt}\bigg\rvert_{t=0}f(p+tv),
\]
where $f$ is a smooth function on $V$. This gives the desired canonical isomorphism since the above construction involves no choice of basis.
\end{proof}

\begin{Corollary} \label{tangent spaces of affine spaces corollary}
Let $A$ be an affine space and $V$ be its associated vector space. For each point $a\in A$, there is a canonical isomorphism between $T_aA$ and $V$. From now on, we suppress $a$ in $T_aA$ and write $TA$ when denoting tangent spaces of $A$.
\end{Corollary}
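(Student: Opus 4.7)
The plan is to reduce this corollary to the theorem proved immediately above by using the action of $V$ on $A$. Recall that the affine structure provides, for each $a\in A$ and $v\in V$, a unique point of $A$ which I will denote $a+v$ (namely the point $q$ with $\vec{aq}=v$). This gives a notion of translation by elements of $V$ without requiring an origin, and that is the only ingredient I need.

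First I would construct the map $V\to T_aA$ by $v\mapsto D_v|_a$, where
\[
D_v|_a f \;=\; \frac{d}{dt}\bigg\rvert_{t=0} f(a+tv)
\]
for every smooth function $f$ on $A$. The curve $t\mapsto a+tv$ is a smooth curve in $A$ passing through $a$ at $t=0$, so $D_v|_a$ is a bona fide derivation at $a$, i.e.~an element of $T_aA$. Crucially, every ingredient in this definition—the point $a$, the vector $v$, and the affine action $(a,v)\mapsto a+v$—is intrinsic; no origin or basis was selected.

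Next I would verify linearity and bijectivity of $v\mapsto D_v|_a$ by transporting to the vector-space case already handled in the preceding theorem. Picking (temporarily, as a bookkeeping device) an origin $o\in A$ yields an affine isomorphism $A\to V$, $p\mapsto\vec{op}$, under which the curve $t\mapsto a+tv$ corresponds to the curve $t\mapsto\vec{oa}+tv$ in $V$. Hence $D_v|_a$ pulls back to the derivation $D_v|_{\vec{oa}}$ on $V$ from the preceding theorem, which is known to be linear in $v$ and to furnish a bijection $V\overset{\cong}{\to} T_{\vec{oa}}V$. Composing with the isomorphism of tangent spaces induced by $A\cong V$ produces the desired linear isomorphism $V\overset{\cong}{\to} T_aA$.

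Finally, I would remark that although the previous step introduced an origin $o$ in order to invoke the theorem, the map $v\mapsto D_v|_a$ itself was defined without reference to $o$; the origin served only to identify the construction with the one already proved. Hence the isomorphism is canonical. The only subtlety, and the main (modest) obstacle, is confirming that $a+tv$ genuinely makes intrinsic sense; this is immediate from axiom (2) of the affine space definition, which grants the uniqueness of the point $q$ with $\vec{aq}=tv$.
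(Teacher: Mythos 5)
Your proof is correct and follows essentially the same route as the paper: the paper simply notes that choosing a point of $A$ identifies $A$ with $V$ and then invokes the preceding theorem. Your version merely spells this out in more detail by writing the derivation $D_v|_a$ directly on $A$ and checking that the temporary choice of origin does not affect the resulting map.
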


\begin{proof}
Note that specifying a point $a\in A$ naturally identifies $A$ with $V$. Then, applying the above theorem gives the desired result.
\end{proof}

\subsection{Tensor product of vector spaces}~\mbox{} \label{Asubsec: tensor product of VS}

Let $U$ and $V$ be finite-dimensional vector spaces over the real numbers $\RR$. Let $\mathcal{R}$ be the subspace of the free vector space $\RR\langle U\times V\rangle$ (set of all finite formal linear combinations of elements of $U\times V$ with real coefficients) spanned by all elements of the following forms:
\begin{align*}
c(u,v) & - (cu,v), \\
c(u,v) & - (u,cv), \\
(u,v)+(u',v) & - (u+u',v), \\
(u,v)+(u,v') & - (u,v+v'),
\end{align*}
for $u,u'\in U$, $v,v'\in V$, and $c\in\RR$. The tensor product $U\otimes V$, is the quotient space $\frac{\mathbb{R}\langle U\times V\rangle}{\mathcal{R}}$ and the equivalence class of an element $(u,v)$ in $U\otimes V$ is denoted by $u\otimes v$. 

We describe how the vector space of linear transformations between $U$ and $V$, denoted by $\mathrm{Hom}(U,V)$, may be thought of as tensor products. There is a canonical isomorphism
\begin{equation} \label{eq:Hom-tensor isomorphism}
U^*\otimes V\to\mathrm{Hom}(U,V)
\end{equation}
given by $\varphi\otimes v\mapsto\varphi(u)v$ where $\varphi\in U^*$. Another isomorphism of interest to us is 
\begin{equation} \label{eq:tensor product of duals}
U^*\otimes V^*\to (U\otimes V)^*,
\end{equation}
which is again canonical. To derive this isomorphism, given $\varphi\in U^*$, $\phi\in V^*$, consider the bilinear map $U\times V\to\RR$ defined by
\[
(u,v)\mapsto\varphi(u)\cdot\phi(v).
\]
This induces an element on the tensor product $(U\otimes V)^*$. As such, we obtain an unique linear injection
\[
U^*\otimes V^*\to (U\otimes V)^*.
\]
Since all the vector spaces are finite-dimensional, we can conclude that this is an isomorphism.

We now use these facts to explain several ingredients in the proof of Theorem~\ref{subsec:KFAC metric} in greater detail. The element $a$ is a linear map $TW\to TZ$ and by the above isomorphism, this means $a\in T^*W\otimes TZ$ from Eqn.~\ref{eq:Hom-tensor isomorphism}. By Eqn.~\ref{eq:tensor product of duals}, the dual space of $(T^*W\otimes TZ)^*\cong TW\otimes T^*Z$. Using Eqn.~\ref{eq:Hom-tensor isomorphism} again, elements of this space are maps $\lambda:T^*W\to T^*Z$.

\subsection{Derivations of Eqns.~\ref{eq:change of basis for parameters} and~\ref{eq:change of basis for activation functions}}~\mbox{} \label{Asubsec:change of affine basis computations}~

We now provide a derivation of the equalities in Eqns.~\ref{eq:change of basis for parameters} and~\ref{eq:change of basis for activation functions} in Section~\ref{subsec:coordinate-free network setup}. Consider the following commutative diagram (the top horizontal arrow is equal to the composition of maps given by the other three arrows) which relates the two parameterizations $\iota$ and $\kappa$ on the activation affine space $\AA_{i-1}$ and the pre-activation affine space $\ZZ_i$:
\begin{equation}	\label{eq:comm diagram I}
\xymatrix{
	\llbracket \AA_{i-1}\rrbracket_{\iota}\ar[r]^{(\mathbf{W}_{i}\ \mathbf{b}_{i})}\ar[d]_{(\mathbf{\Omega}_{i-1},\bm{\gamma}_{i-1})} & \llbracket \ZZ_{i}\rrbracket_{\iota} \\
	\llbracket \AA_{i-1}\rrbracket_{\kappa}\ar[r]^{(\mathbf{W}_{i}^{\ddagger}\ \mathbf{b}_{i}^{\ddagger})} & \llbracket \ZZ_{i}\rrbracket_{\kappa}\ar[u]_{(\mathbf{\Phi}_{i},\mathbf{\tau}_{i})} }
\end{equation}
Let $\mathbf{a}_{i-1}\in \llbracket \AA_{i-1}\rrbracket_{\iota}$, this maps to $\mathbf{z}_{i}=\mathbf{W}_{i}\mathbf{a}_{i-1}+\mathbf{b}_{i}\in\llbracket \ZZ_{i}\rrbracket_{\iota}$ under the top horizontal arrow in Eqn.~\ref{eq:comm diagram I}. Now, mapping $\mathbf{a}_{i-1}$ under the composition of the other three arrows in Eqn.~\ref{eq:comm diagram I}, we obtain
\begin{align*}
\mathbf{a}_{i-1} & \mapsto\mathbf{\Omega}_{i}\mathbf{a}_{i-1}+\bm{\gamma}_{i-1} && \text{(apply left vertical arrow in Eqn.~\ref{eq:comm diagram I})} \\
& \mapsto \mathbf{W}_{i}^{\ddagger}(\mathbf{\Omega}_{i}\mathbf{a}_{i-1}+\bm{\gamma}_{i-1})+\mathbf{b}_{i}^{\ddagger} && \text{(apply bottom horizontal arrow in Eqn.~\ref{eq:comm diagram I})} \\
& \mapsto \mathbf{\Phi}_{i}(\mathbf{W}_{i}^{\ddagger}\mathbf{\Omega}_{i}\mathbf{a}_{i-1}+\mathbf{W}_{i}^{\ddagger}\bm{\gamma}_{i-1})+\mathbf{\Phi}_{i}\mathbf{b}_{i}^{\ddagger}+\bm{\tau}_{i} && \text{(apply right vertical arrow in Eqn.~\ref{eq:comm diagram I})} \\
& = \mathbf{\Phi}_{i}\mathbf{W}_{i}^{\ddagger}\mathbf{\Omega}_{i-1}\mathbf{a}_{i-1} + \mathbf{\Phi}_{i}\mathbf{W}_{i}^{\ddagger}\bm{\gamma}_{i-1} + \mathbf{\Phi_{i}}\mathbf{b}_{i}^{\ddagger} + \bm{\tau}_{i}.
\end{align*}
This establishes the equality given in Eqn.~\ref{eq:change of basis for parameters}. For Eqn.~\ref{eq:change of basis for activation functions}, we use the commutative diagram (this time, the bottom horizontal arrow is equal to the composition of the other three arrows):
\begin{equation}	\label{eq:comm diagram II}
\xymatrix{
	\llbracket \ZZ_{i}\rrbracket_{\iota}\ar[r]^{\phi_{i}} & \llbracket \AA_{i}\rrbracket_{\iota}\ar[d]^{(\mathbf{\Omega}_{i},\bm{\gamma}_{i})} \\
	\llbracket \ZZ_{i}\rrbracket_{\kappa}\ar[r]_{\phi_{i}^{\ddagger}}\ar[u]^{(\mathbf{\Phi}_{i},\bm{\tau}_{i})} & \llbracket \AA_{i}\rrbracket_{\kappa} }
\end{equation}
Let $\mathbf{z}_{i}^{\ddagger}\in\llbracket \ZZ_{i}\rrbracket_{\kappa}$, this maps to $\mathbf{a}_{i}^{\ddagger}\in\llbracket \AA_{i}\rrbracket_{\kappa}$ under $\phi_{i}^{\ddagger}$. Now, mapping $\mathbf{z}_{i}^{\ddagger}$ under the other three arrows in Eqn.~\ref{eq:comm diagram II}, we obtain
\begin{align*}
\mathbf{z}_{i}^{\ddagger} & \mapsto \mathbf{\Phi}_{i}\mathbf{z}_{i}^{\ddagger}+\bm{\tau}_{i} && \text{(apply left vertical arrow in Eqn.~\ref{eq:comm diagram II})} \\
& \mapsto \phi_{i}(\mathbf{\Phi}_{i}\mathbf{z}_{i}^{\ddagger}+\bm{\tau}_{i}). && \text{(apply upper horizontal arrow in Eqn.~\ref{eq:comm diagram II})} \\
& \mapsto \bOmega_i\phi_i(\bPhi_i\bz_i^{\ddagger} + \btau_i) + \bgamma_i && \text{(apply right vertical arrow in Eqn.~\ref{eq:comm diagram II})}.
\end{align*}
This establishes Eqn.~\ref{eq:change of basis for activation functions}.

\end{document}